\newcommand\reallywidehat[1]{%
\savestack{\tmpbox}{\stretchto{%
  \scaleto{%
    \scalerel*[\widthof{\ensuremath{#1}}]{\kern.1pt\mathchar"0362\kern.1pt}%
    {\rule{0ex}{\textheight}}
  }{\textheight}%
}{2.4ex}}%
\stackon[-6.9pt]{#1}{\tmpbox}%
}
\newtheoremstyle{slplain}
  {.4\baselineskip\@plus.1\baselineskip\@minus.1\baselineskip}
  {.3\baselineskip\@plus.1\baselineskip\@minus.1\baselineskip}
  {\itshape}
  {}
  {\bfseries}
  {.}
  { }
  {}
\theoremstyle{slplain} 
\newtheorem*{definition*}{Definition}
\newtheorem*{theorem*}{Theorem}
\newtheorem{theorem}{Theorem}[section]
\newtheorem{lemma}[theorem]{Lemma}
\newtheorem{proposition}[theorem]{Proposition}
\newtheorem{remark}[theorem]{Remark}
\newtheorem{definition}[theorem]{Definition}
\newtheorem*{rep@theorem}{\rep@title}
\newcommand{\newreptheorem}[2]{%
\newenvironment{rep#1}[1]{%
 \def\rep@title{#2 \ref{##1}}%
 \begin{rep@theorem}}%
 {\end{rep@theorem}}}
\theoremstyle{definition}
\theoremstyle{plain} 
\numberwithin{equation}{section}
\newtheoremstyle{etplain}
  {.0\baselineskip\@plus.1\baselineskip\@minus.1\baselineskip}
  {.0\baselineskip\@plus.1\baselineskip\@minus.1\baselineskip}
  {\itshape}
  {}
  {\bfseries}
  {.}
  { }
  {}
\newcommand{\R}{\mathbb{R}}
\newcommand{\norm}[1]{\left|\left| #1 \right|\right|}
\newcommand{\Prob}{\mathbb{P}}
\renewcommand\bar\overline
\DeclareMathOperator*{\esssup}{ess\,sup}
\DeclareMathOperator*{\rhosup}{\rho\text{-}ess\,sup}
\DeclareMathOperator{\supp}{supp}
\newcolumntype{C}[1]{>{\centering\let\newline\\\arraybackslash\hspace{0pt}}m{#1}}
\definecolor{Gray}{gray}{0.9}
\DeclareMathOperator{\E}{\mathbb{E}}
\DeclareMathOperator{\CVaR}{CVaR}
\DeclareMathOperator{\SR}{SR}
\DeclareMathOperator{\AR}{AR}
\DeclareMathOperator{\PR}{PR}
\newcommand{\abs}[1]{\ensuremath{{\left\vert #1 \right\vert}}}
\DeclareMathOperator{\sign}{sign}
\DeclareMathOperator{\indicator}{\mathbb{I}}
\newcommand{\ceil}[1]{\left \lceil #1 \right \rceil}
\newcommand{\calA}{\ensuremath{\mathcal{A}}}
\newcommand{\calB}{\ensuremath{\mathcal{B}}}
\newcommand{\calH}{\ensuremath{\mathcal{H}}}
\newcommand{\calN}{\ensuremath{\mathcal{N}}}
\newcommand{\calS}{\ensuremath{\mathcal{S}}}
\newcommand{\calX}{\ensuremath{\mathcal{X}}}
\newcommand{\calY}{\ensuremath{\mathcal{Y}}}
\newcommand{\bbR}{\ensuremath{\mathbb{R}}}
\newcommand{\fkD}{\ensuremath{\mathfrak{D}}}
\newcommand{\fkp}{\ensuremath{\mathfrak{p}}}
\newcommand{\fkr}{\ensuremath{\mathfrak{r}}}
\def\nd/{\textsuperscript{nd}}
\def\rd/{\textsuperscript{rd}}
\def\th/{\textsuperscript{th}}
\newcommand{\setR}{\bbR}
\def\nnil{\nil}
\newcounter{prob}
\newenvironment{prob}[1][\nil]{%
	\def\tmp{#1}
	\equation
	\ifx\tmp\nnil
		\refstepcounter{prob}
		\tag{P\Roman{prob}}
	\else
		\tag{\tmp}
	\fi
	\aligned%
}{%
	\endaligned\endequation%
}
\newenvironment{prob*}{%
	\csname equation*\endcsname%
	\aligned%
}{%
	\endaligned%
	\csname endequation*\endcsname%
}
\icmltitlerunning{Probabilistically Robust Learning}
\begin{document}

\twocolumn[
\icmltitle{Probabilistically Robust Learning:\\ Balancing Average- and Worst-case Performance}




\begin{icmlauthorlist}
\icmlauthor{Alexander Robey}{penn}
\icmlauthor{Luiz F.\ O.\ Chamon}{cal}
\icmlauthor{George J.\ Pappas}{penn}
\icmlauthor{Hamed Hassani}{penn}

\end{icmlauthorlist}

\icmlaffiliation{penn}{Department of Electrical and Systems Engineering, University of Pennsylvania, Philadelphia, PA, USA}
\icmlaffiliation{cal}{University of California, Berkeley, Berkeley, CA, USA}

\icmlcorrespondingauthor{Alexander Robey}{arobey1@seas.upenn.edu}

\icmlkeywords{Machine Learning, ICML}

\vskip 0.3in
]



\printAffiliationsAndNotice{}  

\begin{abstract}
Many of the successes of machine learning are based on minimizing an averaged loss function. However, it is well-known that this paradigm suffers from robustness issues that hinder its applicability in safety-critical domains. These issues are often addressed by training against worst-case perturbations of data, a technique known as adversarial training. Although empirically effective, adversarial training can be overly conservative, leading to unfavorable trade-offs between nominal performance and robustness.  To this end, in this paper we propose a framework called \emph{probabilistic robustness} that bridges the gap between the accurate, yet brittle average case and the robust, yet conservative worst case by enforcing robustness to most rather than to all perturbations. From a theoretical point of view, this framework overcomes the trade-offs between the performance and the sample-complexity of worst-case and average-case learning.  From a practical point of view, we propose a novel algorithm based on risk-aware optimization that effectively balances average- and worst-case performance at a considerably lower computational cost relative to adversarial training.  Our results on MNIST, CIFAR-10, and SVHN illustrate the advantages of this framework on the spectrum from average- to worst-case robustness.
\end{abstract}
\section{Introduction}

Underlying many of the modern successes of learning is the statistical paradigm of empirical risk minimization~(ERM), in which the goal is to minimize a loss function averaged over data~\cite{vapnik1999nature}. Although ubiquitous in practice, it is now well-known that prediction rules learned by ERM suffer from a severe lack of robustness, which in turn greatly limits their applicability in safety-critical domains~\cite{biggio2013evasion,shen2021improving}. Indeed, this vulnerability has led to a pronounced interest in improving the robustness of modern learning tools~\cite{ goodfellow2014explaining, madry2017towards,zhang2019theoretically}.

\begin{figure*}
    \centering
    \includegraphics[width=0.9\textwidth]{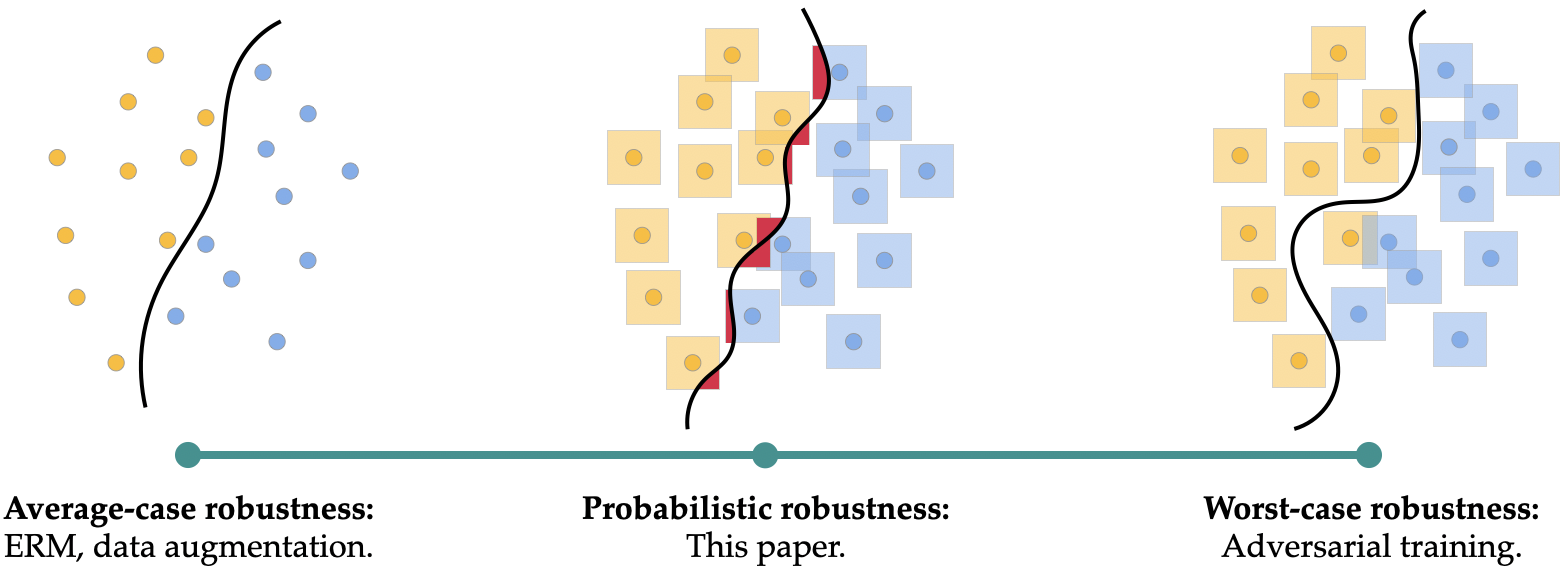}
    \caption{\textbf{The spectrum of robustness.}  Illustration of the different decision boundaries engendered by robustness paradigms. \emph{Left}: the two classes~(yellow and blue dots) can be separated by a simple decision boundary, though it may not be robust to data perturbations. \emph{Right}: the decision boundary must account for the neighborhood of each data point~(yellow and blue boxes), leading to a degraded nominal performance. \emph{Middle}: probabilistic robustness bridges these extremes by allowing a small proportion of perturbations~(shown in red) to be misclassified, mitigating the trade-offs between robustness and accuracy both in theory and in practice.}
    \label{fig:spectrum-of-robustness}
\end{figure*}

To this end, a growing body of work has motivated a learning paradigm known as \emph{adversarial training}, wherein rather than training on the raw data, predictors are trained against worst-case perturbations of data~\cite{goodfellow2014explaining, madry2017towards}. Yet, despite  ample empirical evidence showing that adversarial training improves the robustness of learned predictors~\cite{su2018robustness,croce2020robustbench,tang2021robustart}, this framework is not without drawbacks. Indeed, adversarial training is known to be overly conservative~\cite{tsipras2018robustness,raghunathan2019adversarial}, a property often exhibited by other worst-case approaches ranging from complexity theory~\cite{spielman2004smoothed} to robust control~\cite{zhou1998essentials}. Furthermore, there are broad classes of problems for which the sample complexity of learning a robust predictor is arbitrarily large~\citep{cullina2018pac, montasser2019vc}. Finally, the problem of computing worst-case perturbations of data is nonconvex and underparameterized for most modern learning models including deep neural networks (DNNs).


The fundamental drawbacks of these learning paradigms motivate the need for a new robust learning framework that (i)~avoids the conservatism of adversarial robustness without incurring the brittleness of ERM, (ii)~provides an interpretable way to balance nominal performance and robustness, and (iii)~admits an efficient and effective algorithm. To this end, in this paper we propose a framework called \emph{probabilistic robustness} that bridges the gap between the accurate, yet brittle average-case approach of ERM and the robust, yet conservative worst-case approach of adversarial training. By enforcing robustness to most rather than to all perturbations, we show theoretically and empirically that probabilistic robustness meets the desiderata in~(i)--(iii).  Indeed, our approach parallels a litany of past work in a variety of fields, including smoothed analysis~\citep{spielman2004smoothed} and control theory~\cite{campi2008exact}, wherein robustness is enforced with high probability rather than in the worst case.  In particular, our contributions include:
\begin{itemize}[nolistsep,leftmargin=1em]
    \item \textbf{Novel robustness framework.}  We introduce \emph{probabilistically robust learning}, a new formulation wherein the goal is to learn predictors that are robust to most rather than to all perturbations~(see Fig.~\ref{fig:spectrum-of-robustness}).
    
    \item \textbf{(Lack of) Fundamental trade-offs.} We show that in high dimensional settings, the nominal performance of probabilistically robust classifiers is the same as the Bayes optimal classifier, which contrasts with analogous results for adversarially robust classifiers.
    
    \item \textbf{Sample complexity.} We also show that while the sample complexity of adversarial learning can be arbitrarily high, the sample complexity of our probabilistically robust learning is the same as ERM.
    
    \item \textbf{Tractable algorithm.} Inspired by risk-aware optimization, we propose a tractable algorithm for probabilistically robust training that spans the full spectrum of robustness~(Fig.~\ref{fig:spectrum-of-robustness}) at a considerably lower computational cost than adversarial training.
    
    \item \textbf{Thorough experiments.}  We provide thorough experiments on MNIST, CIFAR-10, and SVHN.  In particular, when we evaluate the ability of algorithms to be robust to 99\% of points in $\ell_\infty$-balls on CIFAR-10, our algorithm outperforms all baselines by six percentage points.
\end{itemize}

\vspace{-0.8em}

\section{Adversarially Robust Learning}


In this paper, we consider the standard supervised learning setting in which data is distributed according to an unknown joint distribution~$\fkD$ over instance-label pairs~$(x,y)$, with instances~$x$ drawn from~$\calX \subseteq \setR^d$ and labels~$y$ drawn from $\calY \subseteq \setR$; in particular, for classification problems we let~$\calY = \{1,\dots,K\}$.  Our goal is to obtain a hypothesis~$h: \calX \to \calY$ belonging to a given hypothesis class $\calH$ that correctly predicts the corresponding label $y$ for each instance $x$.  One common approach to this problem is to minimize a suitably-chosen loss function ~$\ell:\calY \times \calY \to \R_{+}$~(e.g., the 0-1, cross-entropy, or squared loss) on average over~$\fkD$. Explicitly,
\begin{prob}[\textup{P-NOM}]\label{eq:nom-training}
	\min_{h \in \calH}\ \SR(h) \triangleq \E_{(x,y)\sim\fkD}\!\Big[ \ell\big( h(x), y \big) \Big]
		\text{.}
\end{prob}
Here~$\SR(h)$ denotes the standard risk or \emph{nominal performance} of~$h$.\footnote{We assume that~$\ell$ and~$h$ satisfy the integrability conditions needed for the expectation in~\eqref{eq:nom-training} to be well-defined.} The hypothesis class~$\calH$ is often comprised of models~$f_{\theta}$ parameterized by a vector~$\theta$ drawn from a compact set~$\Theta \subset \R^p$, e.g., linear classifiers or deep neural networks with bounded parameters.

Because the distribution~$\fkD$ is unknown, the objective in~\eqref{eq:nom-training} cannot be evaluated in practice.  The core idea behind ERM is to use samples~$(x_j,y_j)$ drawn i.i.d.\ from~$\fkD$ to estimate the expectation:
\begin{prob}[\textup{P-ERM}]\label{eq:erm}
	\min_{h \in \calH}\ \frac{1}{N} \sum_{j = 1}^N \ell(h(x_j), y_j)
		\text{.}
\end{prob}
One of the fundamental problems in learning theory is to establish the number~$N$ of i.i.d.\ samples needed for~\eqref{eq:erm} to approximate the value of~\eqref{eq:nom-training} with high probability.  Problems for which~$N$ is finite are called probably approximately correct~(PAC) learnable~\cite{vapnik1999nature}.

\textbf{Pitfalls of ERM.}  While solving~\eqref{eq:erm} often yields classifiers that are near-optimal for~\eqref{eq:nom-training}, there is now overwhelming evidence that these hypotheses are sensitive to imperceptible perturbations of their input~\cite{biggio2013evasion,szegedy2013intriguing}. Explicitly, given an instance~$x$ and a solution~$h^\star$ for~\eqref{eq:erm}, one can often find a small perturbations~$\delta$ such that~$h(x+\delta) \neq h(x) = y$.\footnote{For conciseness, we focus on perturbations of the form~$x \mapsto x+\delta$. However, our results also apply to more general models, such as those in~\cite{robey2020model,wong2020learning}.}  This issue has been observed in hypotheses ranging from linear models to complex nonlinear models~(e.g., DNNs) and has motivated a considerable body of recent work on robust learning~\cite{goodfellow2014explaining, madry2017towards,jalal2017robust,zhang2019theoretically,kamalaruban2020robust,rebuffi2021fixing}.

\subsection{Adversarial robustness}

Among the approaches that have been proposed to mitigate the sensitivity of hypotheses to input perturbations, there is considerable empirical evidence suggesting that adversarial training is an effective way to obtain adversarially robust classifiers~\cite{su2018robustness,athalye2018obfuscated,croce2020robustbench}. In this paradigm, hypotheses are trained against worst-case perturbations of data rather than on the raw data itself, giving rise to a robust counterpart of~\eqref{eq:nom-training}:
\begin{prob}[\textup{P-ROB}]\label{eq:p-rob}
    \min_{h \in \calH}\ \AR(h) \triangleq \E_{(x,y)} \! \left[
    	\sup_{\delta\in\Delta}\ \ell\big( h(x+\delta), y \big)
    \right] \text{,}
\end{prob}
where~$\Delta \subset \setR^d$ is the set of allowable perturbations and we omit the distribution~$\fkD$ for simplicity. In~\eqref{eq:p-rob},~$\AR(h)$ denotes the \emph{adversarial risk} of~$h$. Observe that in contrast to~\eqref{eq:nom-training}, the objective of~\eqref{eq:p-rob} explicitly penalizes hypotheses that are sensitive to perturbations in~$\Delta$, thus yielding more robust hypotheses. Numerous principled adversarial training algorithms have been proposed for solving~\eqref{eq:p-rob}~\cite{goodfellow2014explaining, madry2017towards, kannan2018adversarial} and closely-related variants~\cite{moosavi2016deepfool, wong2018provable, wang2019improving, zhang2019theoretically}.  

\textbf{Pitfalls of adversarial training.}  Despite the empirical success of adversarial training at defending against worst-case attacks, this paradigm has several limitations. In particular, it is well-known that the improved adversarial robustness offered by~\eqref{eq:p-rob} comes at the cost of degraded nominal performance~\cite{tsipras2018robustness, dobriban2020provable, javanmard2020precise,yang2020closer}. Additionally, evaluating the supremum in~\eqref{eq:p-rob} can be challenging in practice, since the resulting optimization problem is nonconcave and underparameterized for modern hypothesis classes, e.g., DNNs~\cite{soltanolkotabi2018theoretical}.  Finally, from a learning theoretic perspective, there exist hypothesis classes for which~\eqref{eq:nom-training} is PAC learnable while~\eqref{eq:p-rob} is not, i.e., for which~\eqref{eq:nom-training} can be approximated using samples whereas~\eqref{eq:p-rob} cannot~\cite{cullina2018pac, montasser2019vc,diochnos2019lower}.

\subsection{Between the average and worst case}

Aside from the now prevalent framework of adversarial training, many works have proposed alternative methods to mitigate the aforementioned vulnerabilities of learning. A standard technique that dates back to~\cite{holmstrom1992using} is to use a form of data augmentation~\eqref{eq:nom-training}:
\begin{prob}[\textup{P-AVG}]\label{eq:p-avg}
    \min_{h \in \calH} \: \E_{(x,y)} \! \Big[ \E_{\delta \sim \fkr}\!
    	\big[ \ell(h(x+\delta),y) \big]
    \Big]
    	\text{.}
\end{prob}
Here, the inner expectation is taken against a known distribution~$\fkr$. While many algorithms have been proposed for specific~$\fkr$~\cite{krizhevsky2012imagenet, hendrycks2019augmix,laidlaw2019functional,chen2020group}, they fail to yield classifiers sufficiently robust to small perturbations.

Toward obtaining robust alternatives to~\eqref{eq:p-avg}, two recent works propose relaxations of~\eqref{eq:p-rob} that engender notions of robustness between~\eqref{eq:nom-training} and~\eqref{eq:p-rob}. The first relies on the hierarchy of Lebesgue spaces, i.e.,
\begin{prob}\label{eq:rice}
    \min_{h_q \in \calH}\:\mathbb{E}_{(x,y)} \Big[
	    \big\Vert \ell(h_q(x+\delta),y) \big\Vert_{L^q}
    \Big] \text{,}
\end{prob}
where~$\norm{\cdot}_{L^q}$ denotes the Lebesgue~$q$-norm taken over $\Delta$ with respect to the measure~$\fkr$~\cite{rice2021robustness}. The second relaxes the supremum using the soft maximum or LogSumExp function~\cite{li2020tilted, li2021tilted}:
\begin{prob}\label{eq:term}
    \min_{h_t \in \calH} \: \E_{(x,y)} \left[
    	\frac{1}{t} \log\left(
    		\mathbb{E}_{\delta \sim \fkr} \left[ e^{ t \cdot \ell(h_t(x+\delta),y)}  \right]
	    \right)
    \right]
        \text{.}
\end{prob}
While both~\eqref{eq:rice} and~\eqref{eq:term} are strong alternatives to~\eqref{eq:p-rob}, both suffer from significant practical issues related to optimizing their objectives.  More specifically, the objective in~\eqref{eq:rice} cannot be efficiently computed during training due to the difficulty of evaluating the $L^q$ norm~\cite{rice2021robustness}.  And in the case of~\eqref{eq:term}, the gradient of the objective becomes unstable for large values of $t$ when training DNNs.  Furthermore, looking beyond these practical limitations, there is also no clear relationship between the values of~$q$ and~$t$ and robustness properties of the solutions for~\eqref{eq:rice} and~\eqref{eq:term}, making these parameters difficult to choose or interpret. These limitations motivate the need for an alternative formulation of robust learning.

\begin{remark}\label{R:extremes}
Formally, the limiting cases of~\eqref{eq:rice} and~\eqref{eq:term} are not~\eqref{eq:nom-training} and~\eqref{eq:p-rob}. Indeed, for~$q = 1$ and~$t \to 0$, the objectives of both problems approach the objective of~\eqref{eq:p-avg}.  For~$q = \infty$ and~$t \to \infty$, the objectives of~\eqref{eq:rice} and~\eqref{eq:term} can be written in terms of the essential supremum
\begin{prob}\label{eq:ess_rob}
    \min_{h_r \in \calH} \: \E_{(x,y)} \! \left[
    	\esssup_{\delta \sim \fkr}\ \ell\big( h_r(x+\delta), y \big)
    \right]
    	\text{,}
\end{prob}
where~$\esssup_{\delta \sim \fkr} f(\delta)$ denotes an almost everywhere upper bound of~$f$, i.e., an upper bound except perhaps on a set of $\fkr$-measure zero. Note that the essential supremum is a weaker adversary~($\esssup \leq \sup$), although for rich enough hypothesis classes, the value of~\eqref{eq:p-rob} and~\eqref{eq:ess_rob} can be the same~\citep[Lemma 3.8]{bungert2021geometry}.
\end{remark}
\section{Probabilistically robust learning} \label{S:formulation}

The discussion in the previous section identifies three desiderata for a new robust learning framework:
\begin{enumerate}[nolistsep]
    \item[(i)] \textbf{Interpolation.}  The framework should strike a balance between nominal and adversarial performance.
    \item [(ii)] \textbf{Interpretability.}  This interpolation should be precisely controlled by an interpretable parameter.
    \item[(iii)] \textbf{Tractability.}  The framework should admit a computationally tractable training algorithm.
\end{enumerate}
While~\eqref{eq:rice} and~\eqref{eq:term} do achieve~(i), neither meets the criteria in~(ii) or~(iii). On the other hand, the probabilistic robustness framework introduced in this section satisfies all of these desiderata. Moreover, as we show in Section~\ref{sect:analysis}, it benefits from numerous theoretical properties.

\subsection{A probabilistic perspective on robustness} 

The core idea behind probabilistic robustness is to replace the worst-case view of robustness with a probabilistic perspective.  This idea has a long history in numerous fields, including chance-constrained optimization in operations research~\cite{charnes1958cost, miller1965chance} and control theory~\cite{campi2008exact, ben2009robust, ramponi2018consistency, schildbach2014scenario}
and smoothed analysis in algorithmic complexity theory~\cite{spielman2004smoothed}. In each of these domains, probabilistic approaches are founded on the premise that a few rare events are disproportionately responsible for the performance degradation and increased complexity of adversarial solutions. In the context of robust learning, this argument is supported by recent theoretical and empirical observations suggesting that low-dimensional regions of small volume in the data space are responsible for the prevalence of adversarial examples~\cite{gilmer2018adversarial, khoury2018geometry, shamir2021dimpled}. This suggests that because the adversarial training formulation~\eqref{eq:p-rob} does not differentiate between perturbations, it is prone to yielding conservative solutions that overcompensate for rare events.

\begin{figure}
    \centering
    \includegraphics[width=\columnwidth]{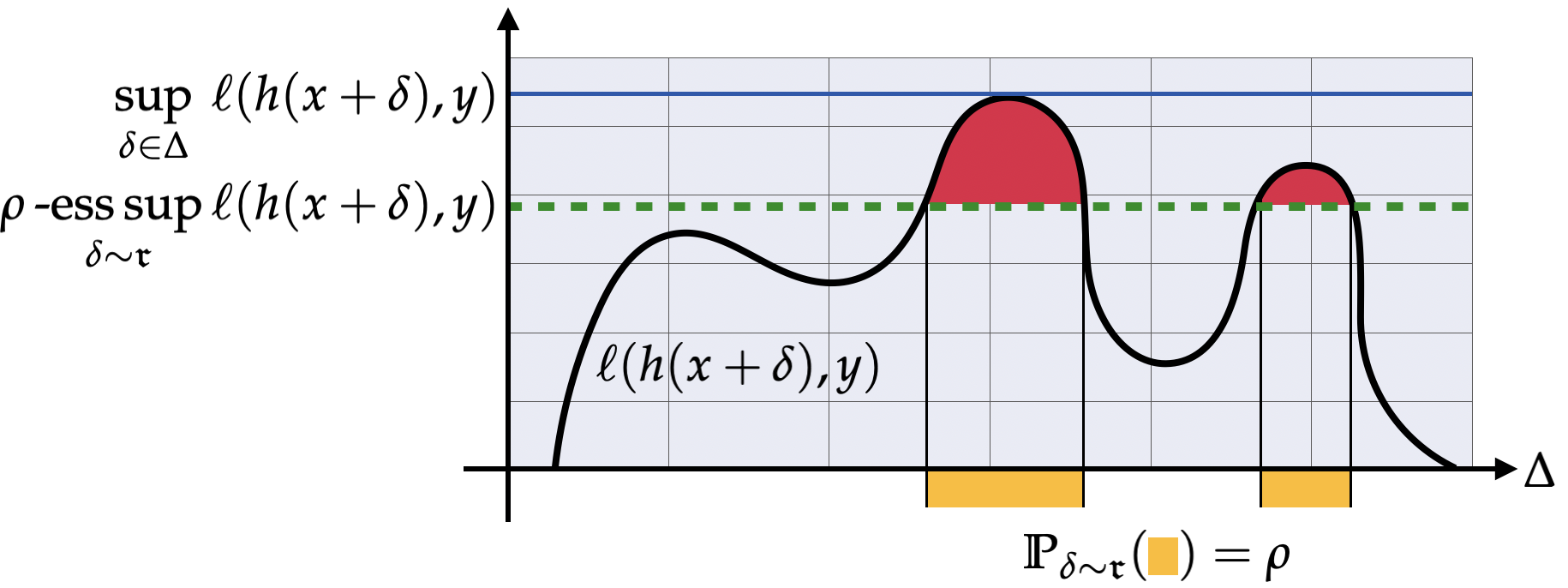}
    \caption{\textbf{The $\pmb{\rhosup}$ operator.}  In this cartoon, we fix $(x,y)\in\Omega$ to show the perturbation set~$\Delta$ on the $x$-axis and the value of $\ell(h(x+\delta),y)$~on the $y$-axis. The solid line shows the value of~$\sup_{\delta\in\Delta}\ell(h(x+\delta),y)$, the least upper bound for~$\ell(h(x+\delta),y)$. The dashed line shows, for a fixed~$\rho>0$, the value of~$\rhosup_{\delta\sim\fkr} \ell(h(x+\delta),y)$, the smallest number~$u$ such that $\ell(h(x+\delta),y)$ takes on values larger than $u$~(shown in red) on a subset~(shown in yellow) with volume not exceeding $\rho$.}
    \label{fig:esssup-cartoon}
\end{figure}

To begin our exposition of probabilistic robustness, first consider the case of the 0-1 loss. Here, adversarial training has shifted focus from the nominal 0-1 loss $\indicator[h(x)\neq y]$ to the adversarially robust 0-1 loss $\indicator[\exists\delta\in\Delta \text{ s.t.\ } h(x+\delta)\neq y]$.  Concretely, this robust loss takes value one if there exists any perturbation $\delta$ in a neighborhood $\Delta$ of a fixed instance $x$ that causes misclassification, and value zero otherwise. Motivated by our discussion in the previous paragraph, we seek a relaxed variant of the robust loss which will allow us to quantify the robust performance of a candidate hypothesis while ignoring regions of insignificant volume in~$\Delta$.  To do so, we first introduce a probability distribution~$\fkr$ (e.g., the uniform distribution) over~$\Delta$ to assess the local probability of error~$\Prob_{\delta\sim\fkr} [h(x+\delta) \neq y]$ around each instance~$x$.  Then, for a fixed tolerance level~$\rho \in [0,1)$, the goal of probabilistic robustness is to minimize the probability that the event $\Prob_{\delta\sim\fkr} [h(x+\delta)= y] < 1- \rho$ will occur; that is, the goal is to ensure that most perturbations do not cause an instance~$x$ to be misclassified.  As such, the smaller the value of~$\rho$, the more stringent the requirement on robustness.  In this way, under the 0-1 loss, our probabilistically robust learning task can then formulated as follows:
\begin{prob}\label{eq:01-loss-unconst}
	\min_{h \in \calH} \: \E_{(x,y)} \Big[\indicator\big[
		\Prob_{\delta \sim \fkr} \left[ h(x+\delta) \neq y \right] > \rho
	\big]\Big]
		\text{.}
\end{prob}
It is then straightforward to see that under the 0-1 loss, probabilistically robust learning is an instance of~\eqref{eq:nom-training} in the sense that we are minimizing the expectation of a particular loss $\indicator[\Prob_{\delta \sim \fkr}\left[h(x+\delta) \neq y\right] > \rho]$.

\subsection{Generalizing to general loss functions}  

With this intuition in mind, we now generalize~\eqref{eq:01-loss-unconst} to arbitrary loss functions. To do so, let~$(\Omega,\calB)$ define a measurable space, where~$\Omega = \calX\times\calY$ and~$\calB$ denotes the Borel~$\sigma$-algebra of~$\Omega$. Observe that for fixed $(x,y)\in\Omega$, the supremum~$t^\star := \sup_{\delta\in\Delta} \ell(h(x+\delta),y)$ from~\eqref{eq:p-rob} can be written in epigraph form as
\begin{equation}\label{eq:epigraph-form}
	t^\star = \min_{t\in\R} \  t \ \ \  \text{s.t.} \ \ \  \ell(h(x+\delta),y)\leq t \quad \forall \delta\in\Delta
		\text{.}
\end{equation}
This formulation makes explicit the fact that the supremum is the least upper bound of~$\ell(h(x+\delta),y)$ (see Fig.~\ref{fig:esssup-cartoon}).

As in the development of~\eqref{eq:01-loss-unconst}, however, we do not need~$t$ to upper bound~$\ell(h(x+\delta),y)$ for all~$\delta \in \Delta$, but only for a proportion~$1-\rho$ of the volume of $\Delta$. We therefore consider the following relaxation of~\eqref{eq:epigraph-form}:
\begin{alignat}{2}\label{eq:stochastic-epigraph}
	u^\star(\rho) = &\min_{u\in\R} \quad  &&u
	\\ 
	&\text{ s.t.} &&  \Prob_{\delta\sim\fkr} \big[ \ell(h(x+\delta),y) \leq u\big] > 1-\rho \notag
\end{alignat}
In contrast to~\eqref{eq:epigraph-form}, the upper bound in~\eqref{eq:stochastic-epigraph} can ignore perturbations for which~$\ell(h(x+\delta),y)$ is large~(red regions in Fig.~\ref{fig:esssup-cartoon}) as long as these perturbations occupy a subset of~$\Delta$ that has probability less than~$\rho$~(yellow regions in Fig.~\ref{fig:esssup-cartoon}). Thus, note that for $\rho \geq \rho^\prime$, it holds that
\begin{equation*}
    u^\star(\rho) \leq u^\star(\rho^\prime) \leq u^\star(0) \leq t^\star
\end{equation*}
and that~$u^\star(0)$ is the essential supremum from measure theory.  In view of this connection, we call~$u^\star(\rho)$ in~\eqref{eq:stochastic-epigraph} the~\emph{$\rho$-essential supremum}~($\rhosup$) (dashed line in Fig.~\ref{fig:esssup-cartoon}) and formalize its definition below.
\begin{definition}\label{D:rho_esssup}
	Let~$(\Omega,\calB,\fkp)$ be a measure space and let $f:\Omega\to\R$ be a measurable function.  Define the set
	\begin{align}
	    U_\rho = \{u \in \setR \mid \fkp ( f^{-1}(u,\infty)) \leq \rho \} \text{ .} \notag
	\end{align}
	Then, the $\rhosup$ is defined as\footnote{While we define~$\rhosup$ as the infimum for~$\rho = 1$, this is done only for consistency as this value will play no significant role in subsequent derivations.}
	\begin{equation*}
		\rhosup_{x \sim \fkp} f(x) =
		\begin{cases}
			\inf \: U_\rho &\rho \in [0,1)
			\\
			\inf\left\{f(x) : x \in \supp(\fkp)\right\} &\rho = 1
		\end{cases}
	\end{equation*}
	where \text{supp}($\fkp$) denotes the support of $\fkp$.
\end{definition}

For a given tolerance level $\rho\in[0,1)$, probabilistically robust learning can now be formalized in full generality as
\begin{mdframed}[roundcorner=5pt, backgroundcolor=yellow!8]
\begin{prob}[P-PRL]\label{eq:p-prl}
	\min_{h_p \in \calH} \: \PR(h_p; \rho) \triangleq \E_{(x,y)} \! \left[
		\rhosup_{\delta \sim \fkr}\ \ell\big( h_p(x+\delta), y \big)
	\right]
\end{prob}
\end{mdframed}
In this problem,~$\fkr$ is defined on the measurable space~$(\Delta,\calB_{\Delta})$, where~$\calB_\Delta$ is the restriction of the $\sigma$-algebra~$\calB$ to~$\Delta$. For consistency, we define the probabilistic robustness problem~\eqref{eq:p-prl} for~$\rho=1$ as~\eqref{eq:p-avg}.

By construction, it is clear that \eqref{eq:p-prl} satisfies desideratum~(i): For $\rho=0$, we recover~\eqref{eq:ess_rob} and for all~$\rho\in(0,1)$, we obtain a strict relaxation of the robustness criteria in~\eqref{eq:p-rob}.  Furthermore, if $\fkr$ is symmetric---meaning that the mean and median coincide, as is the case for the uniform and Gaussian distributions---then we recover~\eqref{eq:p-avg} for $\rho=\nicefrac{1}{2}$.  However, we note that for general distribution $\fkr$ the objective of~\eqref{eq:p-prl} does not approach that of~\eqref{eq:p-avg}.  In practice, this is inconsequential because we are primarily interested in values of~$\rho$ close to zero in order to guarantee robustness in large neighborhoods of the data.  Additionally, as we show in Section~\ref{S:algorithm}, the algorithm we put forward to solve~\eqref{eq:p-prl} yields solutions that exactly recover the average case. Moreover, we show in Sections~\ref{S:algorithm} and~\ref{S:experiments} that this algorithm fulfills desideratum~(iii). 

As for the interpretability of~$\rho$ in desideratum~(ii), notice that the relaxation in~\eqref{eq:p-prl} explicitly minimizes the loss over a neighborhood of~$\fkr$-measure at least~$1-\rho$ of each data point. Thus, in contrast to~\eqref{eq:rice} or~\eqref{eq:term}, this relaxation has a practical interpretation. This interpretability is clearest in the 0-1 loss case~\eqref{eq:01-loss-unconst}, which effectively minimizes~$\Prob_{(x,y)} \big[ \Prob_{\delta \sim \fkr} [h_p(x+\delta) \neq y ] > \rho \big]$.  In this way, probabilistic robustness measures the probability of making an error in a neighborhood of each point and only declares failure if that probability is too large, i.e., larger than~$\rho$. This is in contrast to directly measuring the probability of error as in~\eqref{eq:nom-training} or requiring that the probability of failure vanishes as in~\eqref{eq:ess_rob}.

\section{Statistical properties of probabilistic robustness}
\label{sect:analysis}

In this section, we characterize the behavior of probabilistic robustness in different settings to show that, in addition to meeting the practical desiderata enumerated in Section~\ref{S:formulation}, this framework also enjoys significant statistical advantages over its worst-case counterpart.  In particular, in line with a myriad of past work~\cite{su2018robustness,bhagoji2019lower,dobriban2020provable, javanmard2020precise, cullina2018pac, montasser2020efficiently}, we first observe that the security guarantee of adversarial robustness comes at the cost of degraded nominal performance as well as an arbitrarily large sample complexity. However, in stark contrast to these results, we show that even for arbitrarily small~$\rho$, there exists classes of problems for which probabilistic robustness can be achieved with the same sample complexity as classical learning and at a vanishingly small cost in nominal performance relative to the Bayes optimal classifier.  In the sequel, we first analyze probabilistically robust learning in the two fundamental settings of binary classification and linear regression~(Section~\ref{S:tradeoff}), followed by a learning theoretic characterization of its sample complexity~(Section~\ref{S:sample-comlexity}).

\subsection{Nominal performance vs.\ robustness trade-offs}\label{S:tradeoff}

In this section, we consider perturbation sets of the form~
\begin{align}
    \Delta = \{\delta\in\R^d : \norm{\delta}_2\leq \epsilon\}
\end{align}
for a fixed~$\epsilon > 0$ and we let~$\fkr$ be the uniform distribution over $\Delta$.  We consider binary classification problems with data distributed as
%
\begin{align} \label{eq:mix-of-gaussian}
    x \mid y \sim \mathcal{N}(y\mu, \sigma^2 I_d), \quad y = \begin{cases}
        +1 &\:\text{w.p. } \pi \\
        -1 &\:\text{w.p. } 1-\pi
    \end{cases}
    \text{,}
\end{align}
where~$\pi\in[0,1]$ is the proportion of the~$y=+1$ class, $I_d$ is the $d$-dimensional identity matrix, and~$\sigma > 0$ is the within-class standard deviation. We assume without loss of generality that the class means~$\pm\mu$ are centered about the origin and, by scaling, that~$\sigma=1$. In this setting, it is well-known that the Bayes optimal classifier is
\begin{align}
    h^\star_\textup{Bayes}(x) = \sign\left(x^\top \mu - q/2\right)
\end{align}
where $q = \ln[ (1-\pi)/\pi]$~\cite{anderson1958introduction}.  Moreover, \cite{dobriban2020provable} recently showed that the optimal adversarially robust classifier is
\begin{align}
    h^\star_r(x) = \sign\left( x^\top\mu \big[ 1 - \epsilon/\norm{\mu}_2 \right]_+ - q/2 \big) \text{ ,}
\end{align}
where~$[z]_+ = \max(0,z)$.  In the following proposition, we obtain a closed-form expression for the optimal probabilistically robust linear classifier.
\begin{proposition}\label{prop:opt-beta-rob-gaussian}
Suppose the data is distributed according to~\eqref{eq:mix-of-gaussian} and let $\epsilon < \norm{\mu}_2$. Then, for~$\rho \in [0, \nicefrac{1}{2}]$,
\begin{align}
    h_p^\star(x) =
        \sign\left(x^\top \mu\left(1 - \frac{v_\rho}{\norm{\mu}_2} \right)_+ - \frac{q}{2}\right) \label{eq:mix-gaussians-opt-classifier}
\end{align}
is the optimal linear solution for~\eqref{eq:01-loss-unconst}, where~$v_\rho$ is the Euclidean distance from the origin to a spherical cap of~$\Delta$ with measure~$\rho$. Moreover, it holds that
\begin{align}
    \PR(h_p^\star; \rho) - \SR(h^\star_\textup{Bayes}) = \begin{cases}
        O\! \left( \frac{1}{\sqrt{d}} \right) \text{,} \!\!\!\! & \rho \in \left(0,\frac{1}{2} \right]
        \\
        O(1) \text{,} & \rho = 0.
    \end{cases} \label{eq:mix-of-gauss-risk-diff}
\end{align}
\end{proposition}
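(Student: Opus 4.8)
The plan is to reduce the probabilistically robust $0$--$1$ problem~\eqref{eq:01-loss-unconst}, restricted to linear classifiers, to an \emph{adversarial} problem with a smaller effective radius, to solve that problem in closed form using the result of~\cite{dobriban2020provable}, and finally to control the gap~\eqref{eq:mix-of-gauss-risk-diff} through a concentration-of-measure estimate of the cap radius $v_\rho$. First I would fix a candidate linear classifier $h(x)=\sign(x^\top w - b)$, which after rescaling may be taken to satisfy $\norm{w}_2 = 1$. For $y=+1$ and fixed $x$, a perturbation $\delta$ causes an error exactly when $\delta^\top w < -(x^\top w - b)$; since $\fkr$ is uniform on the ball $\Delta$, the law of the projection $\delta^\top w$ is rotationally symmetric and its tail mass is precisely the normalized spherical-cap measure. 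Hence $\Prob_{\delta\sim\fkr}[h(x+\delta)\neq y] > \rho$ holds if and only if the signed margin $y(x^\top w - b)$ is smaller than $v_\rho$, the distance from the origin at which a cap of $\Delta$ has measure $\rho$. Thus the integrand of~\eqref{eq:01-loss-unconst} equals $\indicator[\,y(x^\top w - b) < v_\rho\,]$, and~\eqref{eq:01-loss-unconst} becomes exactly the adversarial problem in which the worst-case margin requirement $\epsilon$ is replaced by $v_\rho$ (the endpoints $v_0=\epsilon$ and $v_{1/2}=0$ recovering the adversarial and nominal requirements).

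Next I would solve this margin-$v_\rho$ problem. Since it is literally the adversarial classification problem of~\cite{dobriban2020provable} with $\epsilon \mapsto v_\rho \le \epsilon < \norm{\mu}_2$, invoking their closed form with this substitution produces~\eqref{eq:mix-gaussians-opt-classifier} directly. For a self-contained derivation I would argue by rotational symmetry about $\mu$ that the risk depends on $w$ only through $s=\mu^\top w$, so the optimal unit direction maximizes $s$, giving $w=\mu/\norm{\mu}_2$; minimizing the resulting objective $\pi\Phi(b-a)+(1-\pi)\Phi(-b-a)$ over the threshold $b$, with $a=\norm{\mu}_2-v_\rho$, yields the first-order condition $2ab=q$, i.e.\ $b^\star = q/\big(2(\norm{\mu}_2-v_\rho)\big)$, which after clearing the positive factor $\norm{\mu}_2-v_\rho$ is exactly~\eqref{eq:mix-gaussians-opt-classifier}.

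To bound the gap I would set $g(a):=\min_b\big[\pi\Phi(b-a)+(1-\pi)\Phi(-b-a)\big]$. The computation above identifies $\PR(h_p^\star;\rho)=g(\norm{\mu}_2-v_\rho)$ and $\SR(h^\star_\textup{Bayes})=g(\norm{\mu}_2)$ (the $v_\rho=0$ case). By the envelope theorem the minimizer $b^\star(a)$ can be ignored when differentiating, so $g'(a)=-\pi\phi(b^\star-a)-(1-\pi)\phi(-b^\star-a)$, whence $\abs{g'(a)}\le 1/\sqrt{2\pi}$ uniformly. Consequently $\PR(h_p^\star;\rho)-\SR(h^\star_\textup{Bayes}) = g(\norm{\mu}_2-v_\rho)-g(\norm{\mu}_2) \le v_\rho/\sqrt{2\pi}$, which reduces~\eqref{eq:mix-of-gauss-risk-diff} entirely to an estimate of $v_\rho$.

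It then remains to show $v_\rho = O(1/\sqrt d)$ for $\rho\in(0,\nicefrac12]$ and $v_\rho=\Theta(1)$ for $\rho=0$. The latter is immediate since $v_0=\epsilon$. For the former I would write $v_\rho=\epsilon\,t_\rho$, where $t_\rho$ solves $\int_{t_\rho}^1 (1-s^2)^{(d-1)/2}\,ds = \rho \int_{-1}^1 (1-s^2)^{(d-1)/2}\,ds$, and use that the projection of the uniform measure on the unit ball onto a fixed axis, rescaled by $\sqrt d$, converges to $\mathcal N(0,1)$; this gives $t_\rho\sqrt d \to \Phi^{-1}(1-\rho)$ and hence $v_\rho = \Theta\big(\epsilon\,\Phi^{-1}(1-\rho)/\sqrt d\big) = O(1/\sqrt d)$ for fixed $\rho\in(0,\nicefrac12)$ (with $v_{1/2}=0$). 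Substituting into the bound of the previous paragraph yields~\eqref{eq:mix-of-gauss-risk-diff}. I expect the main obstacle to be making this final cap estimate quantitative and two-sided rather than merely asymptotic: one must bound $t_\rho$ non-asymptotically so that the hidden constant's dependence on $\rho$ is controlled, which requires estimating the incomplete Beta integrals above rather than relying on the limiting Gaussian heuristic. By contrast, the reduction and the optimal-classifier derivation are essentially bookkeeping once the cap--margin identification is established.
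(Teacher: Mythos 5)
Your proposal is correct and follows essentially the same route as the paper's proof: you reduce the probabilistic problem to a classification problem with effective radius $v_\rho$ via the cap--margin identification (the paper does the same reduction, phrasing it as nominal classification between Gaussians with means shifted by $v_\rho$ and citing Anderson, which is equivalent to your invocation of Dobriban et al.\ with $\epsilon \mapsto v_\rho$), and you estimate $v_\rho \approx \epsilon\,\Phi^{-1}(1-\rho)/\sqrt{d}$ through the Gaussian limit of one-dimensional projections of the uniform ball, exactly as in the paper's second lemma. The one place you go beyond the paper is the envelope-theorem bound $\abs{g'(a)} \le 1/\sqrt{2\pi}$, which makes rigorous the Lipschitz step (from the $v_\rho$ estimate to the $O(1/\sqrt{d})$ risk gap) that the paper asserts without explicit justification.
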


Concretely, Prop.~\ref{prop:opt-beta-rob-gaussian} conveys three messages. 

Firstly, \eqref{eq:mix-gaussians-opt-classifier} shows that the optimal probabilistically robust linear classifier corresponds to the Bayes classifier with an effective mean $\mu\mapsto \mu(1-v_\rho/\norm{\mu}_2)_+$.  Secondly, $h^\star_p$ depends on the tolerance level~$\rho$ through the measure of a spherical cap of~$\Delta$.  Indeed, it is straightforward to check that~$v_{1/2} = 0$ and~$v_0 = \epsilon,$ and thus~\eqref{eq:mix-gaussians-opt-classifier} recovers~$h^\star_\text{Bayes}$ and~$h^\star_r$ respectively. Thus, in this setting, not only does~\eqref{eq:p-prl} interpolate between~\eqref{eq:p-rob} and~\eqref{eq:nom-training} as~$\rho$ varies from~$0$ to~$\nicefrac{1}{2}$, but so too do its optimal solutions.  

Finally, \eqref{eq:mix-of-gauss-risk-diff} shows that the best achievable probabilistic robustness is essentially the same as the best achievable nominal performance in high dimensions, regardless of the value of~$\rho$ provided that it remains strictly positive.  However, in the adversarially robust setting of~$\rho=0$, the gap between robustness and accuracy does not vanish, which lays bare the conservatism engendered by forcing classifiers to account for a small set of rare events. In this way, a phase transition occurs at~$\rho=0$ in the sense that for any $\rho>0$, the gap between nominal performance and probabilistic robustness vanishes in high dimensions, despite the fact that we protect against an arbitrary proportion $1-\rho$ of perturbations. 

In Appendix~\ref{app:lin-regression}, we study the distinct yet related problem of linear regression with Gaussian features.  In this setting, we observe exactly the same phenomenon, wherein the trade-off between nominal performance and probabilistic robustness vanishes in high dimensions.

\subsection{Sample complexity of probabilistic robustness}\label{S:sample-comlexity}

From a learning theoretic perspective, the behavior of adversarial learning is considerably different from that of classical learning. Indeed, the sample complexity of adversarial learning, i.e., the number of samples needed for the empirical counterpart of~\eqref{eq:p-rob} to approximate its solution with high probability, can often be arbitrarily large relative to~\eqref{eq:nom-training}~\cite{cullina2018pac, yin2019rademacher, montasser2019vc}. The following proposition shows that unlike in the case of adversarial robustness, the sample complexity of probabilistically robust learning can match that of classical learning even for arbitrarily small~$\rho>0$.

\begin{proposition} \label{T:sample_complexity}
Let $\ell$ be the 0-1 loss and let~$\fkr$ be fully supported on~$\Delta$ and absolutely continuous with respect to the Lebesgue measure. For any constant~$\rho_o \in (0,\nicefrac{1}{2})$, there exists a hypothesis class~$\calH_o$ such that the sample complexity of probabilistically robust learning at level $\rho$ is
\begin{equation*}
	N = \begin{cases}
		\Theta\big( \log_2(1/\rho_o) \big) \text{,} &\rho = 0
		\\
		\Theta(1) \text{,} &\rho_o \leq \rho \leq 1-\rho_o
	\end{cases}
\end{equation*}
In particular, $\Theta(1)$ is the sample complexity of~\eqref{eq:nom-training}.
\end{proposition}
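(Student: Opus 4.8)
The plan is to reduce probabilistically robust learning under the $0$-$1$ loss to ordinary PAC learning of a derived binary-loss class, and then to exhibit a single class $\calH_o$ whose derived loss class is rich at $\rho=0$ but collapses for every $\rho \ge \rho_o$. The excerpt already observes that \eqref{eq:p-prl} with the $0$-$1$ loss is an instance of \eqref{eq:nom-training} for the derived loss $\indicator[\Prob_{\delta\sim\fkr}[h(x+\delta)\neq y] > \rho]$; see \eqref{eq:01-loss-unconst}. Hence, for fixed accuracy and confidence, the sample complexity of probabilistically robust learning at level $\rho$ is $\Theta(\VC(\calL_\rho))$, where
\begin{equation*}
\calL_\rho = \big\{ (x,y) \mapsto \indicator[\Prob_{\delta\sim\fkr}[h(x+\delta)\neq y] > \rho] : h \in \calH_o \big\}.
\end{equation*}
It therefore suffices to construct $\calH_o$ with $\VC(\calL_0) = \Theta(\log_2(1/\rho_o))$ and $\VC(\calL_\rho) = \Theta(1)$ for $\rho\in[\rho_o,1-\rho_o]$, and to promote these VC bounds to sample-complexity bounds via the fundamental theorem of PAC learning.

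For the construction, I would set $k = \floor{\log_2(1/\rho_o)}$ and place $k$ instances $x_1,\dots,x_k$, all with label $+1$, whose perturbation neighborhoods $x_i+\Delta$ are pairwise disjoint. Inside each neighborhood fix a region $C_i$ of $\fkr$-measure exactly $\rho_o$ not containing the unperturbed instance; since $\fkr$ is absolutely continuous and fully supported, such positive-measure $C_i$ exist. Take $\calH_o = \{h_b : b\in\{0,1\}^k\}$, where $h_b$ predicts $+1$ everywhere except that it predicts $-1$ on $C_i$ precisely when $b_i = 1$, so $|\calH_o| = 2^k = \Theta(1/\rho_o)$. The crucial feature is the gap between the essential supremum and the $\rho$-quantile: flipping $C_i$ produces a \emph{positive-measure} misclassified set, hence an essential-sup ($\rho=0$) error at $x_i$; but that set has $\fkr$-measure $\rho_o \le \rho$, so it never exceeds the threshold in $\Prob_{\delta\sim\fkr}[\cdot] > \rho$ and is therefore invisible to the $\rho$-robust loss for every $\rho \ge \rho_o$.

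I would then read off the two VC dimensions. At $\rho = 0$, the map $b \mapsto (\text{error pattern of } h_b \text{ on } x_1,\dots,x_k)$ is the identity, so $\calL_0$ shatters $\{x_1,\dots,x_k\}$ and $\VC(\calL_0) \ge k$; conversely $\VC(\calL_0) \le \log_2 |\calL_0| \le \log_2|\calH_o| = k$, giving $\VC(\calL_0) = k = \Theta(\log_2(1/\rho_o))$. The finite-class bound then yields $N = O(\log_2(1/\rho_o))$, while the no-free-lunch theorem applied to a realizable hard family — a uniform distribution over the $k$ points with the robustly-correct label hidden among the $2^k$ alternatives, so that any learner seeing fewer than $\Omega(k)$ points must err on an unseen point — forces $N = \Omega(\log_2(1/\rho_o))$. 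For $\rho \in [\rho_o, 1-\rho_o]$, every $h_b$ induces the identically-zero $\rho$-robust loss on the support, so $\calL_\rho$ is a single function with $\VC(\calL_\rho)=0$; the realizable sample complexity is then governed purely by the confidence and accuracy constants, i.e.\ $\Theta(1)$, matching the nominal sample complexity of \eqref{eq:nom-training}.

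The main obstacle is the joint calibration: the \emph{same} class must shatter $\Theta(\log_2(1/\rho_o))$ points at $\rho = 0$ yet collapse to constant complexity for all $\rho \ge \rho_o$. Everything hinges on choosing the flip regions $C_i$ of measure exactly $\rho_o$ — positive, hence seen by $\esssup$ in \eqref{eq:ess_rob}, but not larger than $\rho$, hence invisible to the $\rhosup$ in \eqref{eq:p-prl}. The delicate points are (a) handling the strict inequality ``$>\rho$'' in Definition~\ref{D:rho_esssup} so that the collapse is exact and uniform over the whole interval $[\rho_o,1-\rho_o]$, (b) capping $|\calH_o| = \Theta(1/\rho_o)$ so the robust upper bound follows from cardinality alone and matches the shattering lower bound, and (c) converting the VC gap into an \emph{honest} sample-complexity gap, i.e.\ a single realizable distribution that forces $\Omega(\log_2(1/\rho_o))$ samples at $\rho = 0$ while remaining $O(1)$-learnable for every $\rho \ge \rho_o$.
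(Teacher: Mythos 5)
Your high-level strategy is the same as the paper's: reduce the sample complexity to the VC dimension of the induced loss class $\calL_\rho$ via the distribution-free characterization (the paper's Lemma~\ref{L:sample_complexity}), then build a finite hypothesis class indexed by $b\in\{0,1\}^k$ with $k=\Theta(\log_2(1/\rho_o))$ whose members flip positive-measure regions inside disjoint perturbation neighborhoods, so that $k$ points are shattered at $\rho=0$ while the class collapses for $\rho\geq\rho_o$ (the paper adapts exactly this kind of construction from Montasser et al.). However, your calibration $\fkr(C_i)=\rho_o$ \emph{exactly} breaks the collapse at the right endpoint $\rho=1-\rho_o$, which the proposition includes. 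Look at the points $(x_i,-1)$: a hypothesis $h_b$ with $b_i=1$ predicts $-1$ precisely on $C_i$ inside $x_i+\Delta$, so its error probability there is $1-\rho_o$, whereas $h_b$ with $b_i=0$ has error probability $1$. At $\rho=1-\rho_o$ the induced loss $\indicator[\,\cdot\,>\rho]$ is therefore $0$ in the first case and $1$ in the second, so $\calL_{1-\rho_o}$ shatters all of $\{(x_i,-1)\}_{i=1}^{k}$ and the sample complexity at that level is $\Theta(\log_2(1/\rho_o))$, not the claimed $\Theta(1)$. The fix is to give $C_i$ measure strictly between $0$ and $\rho_o$, say $\rho_o/2$: then every $h_b$ has error probability at most $\rho_o/2<\rho_o\leq\rho$ at label-$(+1)$ points (loss $0$) and at least $1-\rho_o/2>1-\rho_o\geq\rho$ at label-$(-1)$ points (loss $1$), uniformly over the closed interval. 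The paper enforces the analogous margin by giving each flip set measure at most $\rho_o/m$ rather than $\rho_o$.

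A second gap is that your collapse argument only inspects ``the support,'' i.e.\ the constructed label-$(+1)$ points. The sample-complexity characterization you invoke is distribution-free, so $\VC(\calL_\rho)$ must be controlled over \emph{all} of $\calX\times\calY$: label-$(-1)$ points (where, as shown above, your construction actually misbehaves, and where the loss is $1$ rather than $0$, so the claim that $\calL_\rho$ is ``identically-zero'' is false in any case --- it should collapse to the single function $\indicator[y=-1]$), and points $x$ whose neighborhood $x+\Delta$ only partially overlaps one or several of the $C_i$. This whole-space case analysis is precisely the bulk of the paper's proof (showing the growth function satisfies $\Pi(2)<4$), and carrying it out also requires placing the $x_i$ far enough apart that any single translate of $\Delta$ meets at most one flip region --- a condition you never impose. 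With the measure correction, that separation assumption, and the full case analysis over arbitrary point pairs, your simpler one-region-per-neighborhood construction does go through and is arguably cleaner than the paper's signature-indexed family of $2^{m-1}$ sets per neighborhood.
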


A formal statement of this result and the requisite preliminaries are provided in~Appendix~\ref{A:learning-theory-proofs}. Concretely, Proposition~\ref{T:sample_complexity} shows that there exist learning problems for which the sample complexity of ERM and PRL are the same, and for which the sample complexity of adversarial training is much larger\footnote{In fact, in the case of~$\rho = 0$, i.e., adversarial robustness, \cite{montasser2019vc} shows that the problem can be unlearnable, i.e., have infinite sample complexity.} than PRL and ERM.  However, the result also highlights the fact that when protecting against an overwhelmingly large proportion $1-\rho_o$ of perturbations, PRL can transition from having the sample complexity of classical learning to that of adversarial learning depending on the value of $\rho$.  Note that the hypothesis class $\calH_o$ of the problem depends on $\rho$, meaning that although the sample complexity can depend on $\rho$, there are still problems for which the sample complexity of PRL is exponentially smaller than adversarial training.  This implies that the conservatism of adversarial learning can manifest itself not only in the form of nominal performance degradation~\cite{tsipras2018robustness}, but also in terms of learning complexity.


\section{A tractable, risk-aware algorithm} \label{S:algorithm}

So far, we have established that probabilistically robust learning has numerous desirable practical and theoretical properties.  However, the stochastic, non-convex, non-smooth nature of the $\rhosup$ means that in practice solving~\eqref{eq:p-prl} presents a significant challenge. Nevertheless, in this section we show that the~$\rhosup$ admits a tight convex upper bound that can be efficiently optimized using stochastic gradient methods.  Given this insight, we propose a novel algorithm for probabilistically robust learning which is guaranteed to interpolate between~\eqref{eq:p-avg} and~\eqref{eq:p-rob}.

\begin{algorithm}[t]
  \caption{Probabilistically Robust Learning (PRL)}
  \label{alg:cvar-sgd}
\begin{algorithmic}[1]
  \STATE {\bfseries Hyperparameters: } sample size $M$, step sizes~$\eta_\alpha, \eta > 0$, robustness parameter~$\rho > 0$, neighborhood distribution~$\fkr$, num.\ of inner optimization steps~$T$, batch size~$B$
  \REPEAT
  \FOR{minibatch $\{(x_j, y_j)\}_{j=1}^B$}
  	
    \FOR{$T$ steps}
        \STATE Draw $\delta_k \sim \fkr$,\ \ $k = 1,\dots,M$
    	\STATE $g_{\alpha_n} \gets 1 - \frac{1}{\rho M}
    		\sum\limits_{k=1}^M \indicator\big[ \ell(f_\theta(x_j+\delta_k), y_j) \geq \alpha_j \big]$
	    \STATE $\alpha_j \gets \alpha_j - \eta_\alpha g_{\alpha_j}$,\ \ for $n=1,\dots,B$
    \ENDFOR
    \STATE $g \gets \frac{1}{\rho M B} \sum\limits_{j,k}
    	\nabla_\theta \Big[ \ell\big( f_\theta(x_j+\delta_k), y_j \big) - \alpha_j \Big]_+$
    
    \STATE $\theta \gets \theta - \eta g$
  \ENDFOR
  \UNTIL{convergence}
\end{algorithmic}
\end{algorithm}

\subsection{A convex upper bound for the $\pmb{\rhosup}$}

Toward obtaining a practical algorithm for training probabilistically robust predictors, we first consider the relationship between probabilistic robustness and risk mitigation in portfolio optimization~\cite{krokhmal2002portfolio}. To this end, notice that the~$\rhosup$ is closely related to the inverse cumulative distribution function~(CDF): If~$F_{z}$ is the CDF of a random variable~$z$ with distribution~$\fkp$, then~$\rhosup_{z \sim \fkp} z = F_z^{-1}(\rho)$. For an appropriately-chosen distribution~$\fkp$, $F_z^{-1}(\rho)$ is known as the value-at-risk~(VaR) in the portfolio optimization literature. However, VaR is seldom used in practice due to its computational and theoretical limitations.  Indeed, VaR is often replaced with a tractable, convex upper bound known as the condition value-at-risk (CVaR)~\cite{rockafellar2000optimization, rockafellar2002conditional}.  Concretely, given a function~$f$ and a continuous distribution~$\fkp$, CVaR can be interpreted as the expected value of~$f$ on the tail of the distribution, i.e.,
\begin{equation}\label{eq:cvar}
    \CVaR_{\rho}(f;\fkp) = \E_{z \sim \fkp}\big[ f(z) \mid f(z) \geq F_z^{-1}(\rho) \big].
\end{equation}
It is straightforward to show that~$\CVaR_0(f;\fkp) = \E_{z\sim\fkp}[ f(z) ]$ and~$\CVaR_1(f;\fkp) = \esssup_{z\sim\fkp} f(z)$. In view of this property, it is not surprising that CVaR is an upper bound on~$\rhosup$, a result we summarize below:

\begin{proposition}[\cite{nemirovski2007convex}]\label{T:cvar-upper-bound}
CVaR is the tightest convex upper bound of~$\rhosup$, i.e.,
\begin{align}
    \rhosup_{z\sim\fkp} f(z) \leq \CVaR_{1-\rho} (f; \fkp)
\end{align}
with equality when~$\rho = 0$ or~$\rho = 1$.
\end{proposition}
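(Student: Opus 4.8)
The plan is to reduce the statement to the classical relationship between value-at-risk and conditional value-at-risk, using the quantile characterization of the $\rhosup$ that follows immediately from Definition~\ref{D:rho_esssup}. Writing $q := \rhosup_{z\sim\fkp} f(z)$, I would first observe that the set $U_\rho = \{u : \fkp(f^{-1}(u,\infty)) \le \rho\}$ is exactly $\{u : \Prob_{z\sim\fkp}[f(z) > u] \le \rho\}$, so that $q = \inf U_\rho$ is the generalized $(1-\rho)$-quantile of the law of $f(z)$, i.e.\ the value-at-risk at level $1-\rho$. Two one-sided facts are all I will need: $\Prob[f(z) > q] \le \rho$ and, by taking $u \uparrow q$ in the defining inequality, $\Prob[f(z) \ge q] \ge \rho$.

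The core tool is the Rockafellar--Uryasev variational formula, which in the paper's indexing reads
\begin{equation*}
    \CVaR_{1-\rho}(f;\fkp) = \min_{\alpha\in\R}\Big\{ \alpha + \tfrac{1}{\rho}\,\E_{z\sim\fkp}\big[(f(z)-\alpha)_+\big]\Big\}.
\end{equation*}
I would either cite or derive this, as it does the heavy lifting. The bracketed expression is jointly convex in $(\alpha,f)$ (since $(f-\alpha)_+$ is a maximum of two affine maps and expectation preserves convexity), and partial minimization of a jointly convex function over $\alpha$ yields a convex functional of $f$; this establishes that $\CVaR_{1-\rho}(\cdot\,;\fkp)$ is convex. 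It is also precisely the objective optimized by Algorithm~\ref{alg:cvar-sgd}.

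For the inequality itself, I would show that \emph{every} feasible $\alpha$ already dominates $q$, namely $g(\alpha) := \alpha + \tfrac1\rho\E[(f(z)-\alpha)_+] \ge q$ for all $\alpha$, whence $\CVaR_{1-\rho} = \min_\alpha g(\alpha) \ge q = \rhosup_\rho f$. When $\alpha \ge q$ this is immediate, since the positive-part term is nonnegative and hence $g(\alpha) \ge \alpha \ge q$. When $\alpha < q$ I would use the pointwise bound $(f-\alpha)_+ \ge (q-\alpha)\,\indicator[f \ge q]$ together with $\Prob[f(z)\ge q]\ge\rho$ to get $\E[(f-\alpha)_+] \ge (q-\alpha)\rho$, so that $g(\alpha) \ge \alpha + (q-\alpha) = q$. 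This is short and self-contained, and I expect no difficulty here.

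The delicate part is the word \emph{tightest}. Here I would invoke the Nemirovski--Shapiro generating-function viewpoint: an admissible convex, nondecreasing $\psi$ with $\psi \ge \indicator[\,\cdot\,>0]$ and $\psi(0)=1$ produces a convex, conservative approximation of the underlying chance-constrained (value-at-risk) quantity, and $\CVaR$ is the bound generated by the pointwise-minimal admissible choice $\psi(s)=[1+s]_+$; any other admissible $\psi$ dominates $[1+s]_+$ pointwise and hence yields a more conservative (larger) convex surrogate. The main obstacle is making the competing class precise: one must specify in what sense $\CVaR$ is minimal (minimal among law-invariant convex surrogates built from such generating functions, or equivalently the smallest coherent risk measure dominating value-at-risk, rather than among \emph{all} convex functionals, since e.g.\ the mean is convex but fails to upper-bound $q$). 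Finally, the endpoints are read off from the boundary identities: at $\rho=0$ one has $U_0=\{u:\Prob[f>u]=0\}$, so $\rhosup_0 f = \esssup f = \CVaR_1(f;\fkp)$ and equality holds (consistently, plugging $\alpha=q=\esssup f$ into the gap $\tfrac1\rho\E[(f-q)_+]$ shows it vanishes); the $\rho=1$ endpoint is the degenerate case singled out in the footnote to Definition~\ref{D:rho_esssup}, where the two sides are matched by convention through $\CVaR_0(f;\fkp)=\E[f]$.
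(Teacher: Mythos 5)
The paper contains no proof of this proposition --- it is imported verbatim from Nemirovski and Shapiro's work on convex approximations of chance constraints, and none of the appendices touch it --- so there is no internal argument to compare yours against; what you have written is strictly more than the paper provides. Judged on its own, the parts you prove are correct. The identification of $\rhosup_{z\sim\fkp} f(z)$ with the generalized $(1-\rho)$-quantile $q$, and the two one-sided facts $\Prob[f(z)>q]\le\rho$ and $\Prob[f(z)\ge q]\ge\rho$, follow exactly as you say by monotone continuity of $\fkp$ along $u\downarrow q$ and $u\uparrow q$. Your indexing of the Rockafellar--Uryasev formula agrees with the paper's \eqref{eq:cvar-vartional} evaluated at level $1-\rho$, and the case split $\alpha\ge q$ versus $\alpha<q$ with the pointwise bound $(f-\alpha)_+\ \ge\ (q-\alpha)\,\indicator[f\ge q]$ is a complete, self-contained proof that every value of the variational objective dominates $q$, hence $\rhosup_{z\sim\fkp}f(z)\le\CVaR_{1-\rho}(f;\fkp)$. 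The convexity-via-partial-minimization observation is also sound. The two places where you stop short are precisely the places the paper itself delegates to the citation: the word \emph{tightest}, which you correctly reduce to the Nemirovski--Shapiro generating-function argument while flagging that the competing class must be specified (minimality holds among conservative convex approximations of the chance constraint, not among all convex functionals --- your counterexample of the mean makes this point well); and the $\rho=1$ equality, which, as you note, holds only through the paper's convention of defining the $\rho=1$ problem as the average case, since Definition~\ref{D:rho_esssup} taken literally gives the essential infimum there, which lies below $\CVaR_0(f;\fkp)=\E_{z\sim\fkp}[f(z)]$. Neither of these is a gap relative to the paper; if anything, your writeup supplies the elementary half of the result that the paper leaves implicit.
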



\subsection{Minimizing the conditional value at risk}

The main computational advantage of CVaR is that it admits the following convex, variational characterization:
\begin{equation}\label{eq:cvar-vartional}
    \CVaR_{\rho}(f;\fkp) = \inf_{\alpha\in\R}\ \alpha + \frac{\E_{z\sim\fkp}\big[ [f(z)-\alpha]_+ \big]}{1-\rho}
    	\text{.}
\end{equation}
Given this form, CVaR can be computed efficiently by using stochastic gradient-based techniques on~\eqref{eq:cvar-vartional}. This is the basis of the probabilistically robust training method detailed in Algorithm~\ref{alg:cvar-sgd}, which tackles the statistical problem
\begin{prob}[P-CVaR] \label{eq:p-cvar}
    \min_{h_p\in\calH}\ \E_{(x,y)} \Big[ \CVaR_{1-\rho} \left( \ell(h_p(x+\delta),y); \fkr \right) \Big]
\end{prob}
for parameterized, differentiable hypothesis classes~$\calH = \{f_\theta: \theta\in\Theta\}$.  Notice that like~\eqref{eq:p-rob},~\eqref{eq:p-cvar} is a \emph{composite} optimization problem involving an inner minimization over~$\alpha$ to compute CVaR and an outer minimization over~$\theta$ to train the predictor.  However, unlike~\eqref{eq:p-rob}, the inner problem in~\eqref{eq:p-cvar} is \emph{convex} regardless of $\calH$, and moreover the gradient of the objective in~\eqref{eq:cvar-vartional} can be computed in closed form.  To this end, in lines 5--6 of Algorithm~\ref{alg:cvar-sgd}, we compute CVaR via stochastic gradient descent (SGD) by sampling perturbations $\delta_k\sim\fkr$~\cite{thomas2019concentration}.  Then, in lines 9--10, we run SGD on the outer problem using an empirical approximation of the expectation based on a finite set of i.i.d.\ samples~$\{(x_j,y_j)\} \sim \fkD$ as in~\eqref{eq:erm}.

\begin{figure}
    \centering
    \includegraphics[width=0.9\columnwidth]{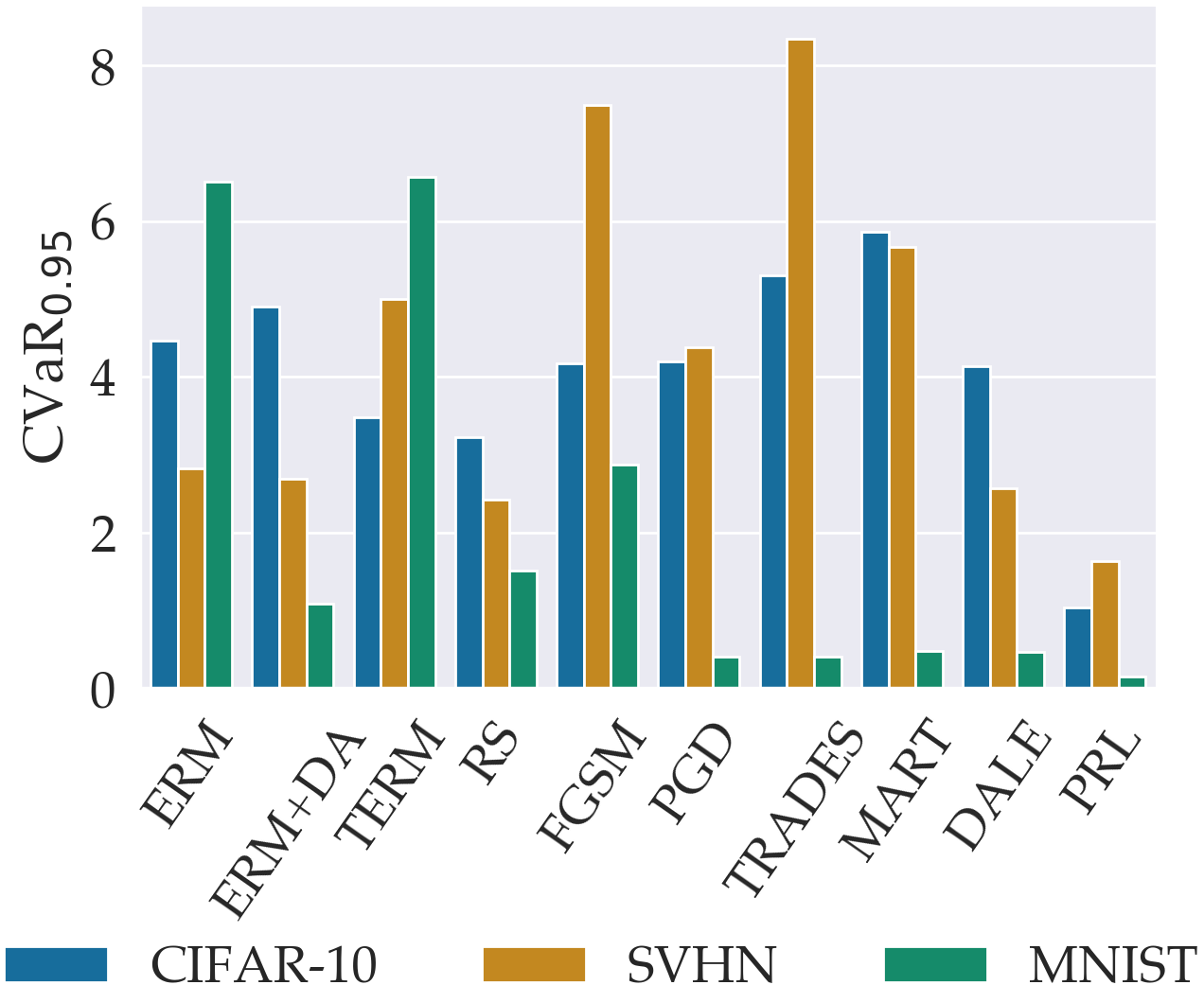}
    \caption{\textbf{CVaR as a metric for test-time robustness.}  We plot the conditional value at risk $\CVaR_{0.95}(\ell(h(x+\delta),y); \fkr)$ averaged over the test data points in CIFAR-10, SVHN, and MNIST respectively. Observe that PRL is more effectively able to minimize the objective in~\eqref{eq:p-cvar} than any of the baselines.}
    \label{fig:test-time-cvar}
    \vspace{-0.2em}
\end{figure}

\section{Experiments} \label{S:experiments}

We conclude our work by thoroughly evaluating the performance of the algorithm proposed in the previous section on three standard benchmarks: MNIST, CIFAR-10, and SVHN.  Throughout, we consider the perturbation set $\Delta = \{\delta\in\R^d : \norm{\delta}_\infty \leq \epsilon\}$ under the uniform distribution $\fkr$; for MNIST, we use $\epsilon=0.3$ and for CIFAR-10 and SVHN, we use $\epsilon=8/255$.  Further details concerning hyperparameter selection are deferred to the appendix.

\textbf{Baseline algorithms.}  We consider a range of baselines, including three variants of ERM: standard ERM~\cite{vapnik1999nature}, tilted ERM (denoted TERM)~\cite{li2020tilted,li2021tilted}, and ERM with data augmentation (denoted ERM+DA) wherein we run ERM on randomly perturbed instances.  Furthermore, we compare to the $L^q$ norm-based Hamiltonian Monte Carlo (N-HMC) method of~\cite{rice2021robustness}.  We also run various state-of-the-art adversarial training algorithms, including FGSM~\cite{goodfellow2014explaining}, PGD~\cite{madry2017towards}, TRADES~\cite{zhang2019theoretically}, MART~\cite{wang2019improving}, and DALE~\cite{robey2021adversarial}.

\textbf{Evaluation metrics.}  To evaluate the algorithms we consider, for each dataset we record the clean accuracy and the adversarial accuracy against a PGD adversary.  We also record the accuracy of each algorithm on perturbed samples in two ways.  Firstly, for each data point we randomly draw 100 samples from $\fkr$ and then record the average accuracy across perturbed samples $x+\delta$; we denote these accuracies by ``Aug.'' in the relevant tables.  And secondly, to explicitly measure probabilistic robustness, we propose the following \emph{quantile accuracy} metric, the form of which follows directly from the probabilistically robust 0-1 loss defined in~\eqref{eq:01-loss-unconst}:
\begin{align}
    \text{ProbAcc}(\rho) =  \indicator\left[ \Prob_{\delta\sim\fkr} \left[ h(x+\delta)= y \right] \geq 1-\rho \right]. \label{eq:quant-acc}
\end{align}
In words, this metric describes the proportion of instances which are probabilistically robust with tolerance level $\rho$, and therefore this will be our primary metric for evaluating probabilistic robustness for a given tolerance level $\rho$.

\begin{table}[t!]
    \centering
    \caption{\textbf{Classification results for CIFAR-10.}}
    \vspace{0.05in}
    \resizebox{\columnwidth}{!}{%
    \begin{tabular}{ccccccc} \toprule
     \multirow{2}{*}{Algorithm} & \multicolumn{3}{c}{Test Accuracy} & \multicolumn{3}{c}{ProbAcc($\rho)$} \\ \cmidrule(lr){2-4} \cmidrule{5-7}
         & Clean & Aug.\ & Adv.\ & 0.1 & 0.05 & 0.01 \\ \midrule
         ERM & \textbf{94.38} & 91.31 & 1.25 & 86.35 & 84.20 & 79.17 \\
         ERM+DA & 94.21 & 91.15 & 1.08 & 86.35 & 84.15 & 79.19 \\
         TERM & 93.19 & 89.95 & 8.93 & 84.42 & 82.11 & 76.46 \\
         N-HMC & 85.07 & 84.41 & 3.24 & 79.50 & 77.96 & 74.76 \\
         FGSM & 84.96 & 84.65 & 43.50 & 83.76 & 83.50 & 82.85 \\
         PGD & 84.38 & 84.15 & 47.07 & 83.18 & 82.90 & 82.32 \\
         TRADES & 80.42 & 80.25 & 48.54 & 79.38 & 79.12 & 78.65 \\
         MART & 81.54 & 81.32 & 48.90 & 80.44 & 80.21 & 79.62 \\
         DALE & 84.83 & 84.69 & \textbf{50.02} & 83.77 & 83.53 & 82.90 \\ \midrule
         \rowcolor{Gray} PRL & 93.82 & \textbf{93.77} & 0.71 & \textbf{91.45} & \textbf{90.63} & \textbf{88.55} \\ \bottomrule
    \end{tabular}}
    \label{tab:cifar-accs}

    \centering
    \caption{\textbf{Classification results for SVHN.}}
    \vspace{0.05in}
    \resizebox{\columnwidth}{!}{%
    \begin{tabular}{ccccccc} \toprule
     \multirow{2}{*}{Algorithm} & \multicolumn{3}{c}{Test Accuracy} & \multicolumn{3}{c}{ProbAcc($\rho)$} \\ \cmidrule(lr){2-4} \cmidrule{5-7}
         & Clean & Aug.\ & Adv.\ & 0.1 & 0.05 & 0.01 \\ \midrule
         ERM & 94.44 & 94.28 & 2.72 & 92.16 & 91.40 & 89.42 \\
         ERM+DA & 94.69 & 94.43 & 2.08 & 92.65 & 92.01 & 89.92 \\
         TERM & 91.85 & 91.58 & 18.33 & 89.01 & 88.04 & 85.85 \\
         N-HMC & 90.32 & 90.55 & 18.30 & 88.79 & 87.61 & 86.12 \\
         FGSM & 80.69 & 85.55 & 32.82 & 80.18 & 78.02 & 74.87 \\
         PGD & 91.19 & 91.29 & 44.89 & 90.15 & 89.68 & 83.82 \\
         TRADES & 86.16 & 86.47 & \textbf{54.89} & 85.09 & 84.76 & 83.82 \\
         MART & 90.20 & 90.44 & 45.23 & 89.81 & 88.82 & 84.32 \\
         DALE & 93.85 & 93.72 & 51.98 & 92.52 & 91.08 & 89.19 \\ \midrule
         \rowcolor{Gray} PRL & \textbf{95.00} & \textbf{94.81} & 3.11 & \textbf{93.28} & \textbf{92.97} & \textbf{91.74} \\ \bottomrule
    \end{tabular}}
    \label{tab:svhn-accs}

    \centering
    \caption{\textbf{Classification results for MNIST.}}
    \vspace{0.05in}
    \resizebox{\columnwidth}{!}{%
    \begin{tabular}{ccccccc} \toprule
     \multirow{2}{*}{Algorithm} & \multicolumn{3}{c}{Test Accuracy} & \multicolumn{3}{c}{ProbAcc($\rho)$} \\ \cmidrule(lr){2-4} \cmidrule{5-7}
         & Clean & Aug.\ & Adv.\ & 0.1 & 0.05 & 0.01 \\ \midrule
        ERM & 99.37 & 98.82 & 0.01 & 97.96 & 97.96 & 96.66 \\
        ERM+DA & \textbf{99.42} & 99.13 & 5.23 & 98.46 & 98.12 & 97.30 \\
        TERM & 99.20 & 98.55 & 11.27 & 97.15 & 96.42 & 94.15 \\
        N-HMC & 99.33 & \textbf{99.25} & 3.91 & 98.85 & 98.71 & 98.23 \\
        FGSM & 98.86 & 98.72 & 19.34 & 98.00 & 97.83 & 97.25 \\
        PGD & 99.16 & 99.10 & 94.45 & 99.05 & 98.63 & 98.34 \\
        TRADES & 99.10 & 99.04 & \textbf{94.76} & 98.71 & 98.61 & 98.33 \\
        MART & 98.94 & 98.98 & 94.13 & 98.59 & 98.39 & 97.98 \\ \midrule
        \rowcolor{Gray} PRL & 99.32 & \textbf{99.25} & 26.03 & \textbf{99.27} & \textbf{99.01} & \textbf{98.54} \\ \bottomrule
    \end{tabular}}
    \label{tab:mnist-accs}
    \vspace{-1.0em}
\end{table}

\textbf{Clean, robust, and quantile accuracies.} In Tables~\ref{tab:cifar-accs}--\ref{tab:mnist-accs}, we record the clean, robust, and probabilistic error metrics described above for PRL and a range of baselines.  Throughout, the value of $\rho$ was chosen by cross-validation; see Appendix~\ref{app:training} for details. Given these results, several remarks are in order.  Firstly, across each of these tables, it is clear that the PRL algorithm does not incur the same degradation in nominal performance as does adversarial training; indeed, on CIFAR-10 and MNIST, the clean accuracy of PRL is within one percentage point of ERM, and for SVHN, the clean accuracy of PRL surpasses that of ERM.  A second observation is that across these datasets, PRL offers significant improvements in the ProbAcc$(\rho)$ metric.  This improvement manifests most clearly on CIFAR-10, wherein PRL improves by more than six percentage points over all baseline algorithms for $\rho=0.01$.  Moreover, the gap between the ProbAcc of PRL and that of the baselines increases as $\rho$ decreases, indicating that PRL is particularly effective for more stringent robustness requirements.

We also highlight the fact that PRL consistently outperforms both TERM and~\cite{rice2021robustness} on the clean, augmented, and quantile accuracy metrics.\footnote{We selected $t$ and $q$ for TERM and N-HMC by cross-validation; see Appendix~\ref{app:training} for details.}  This demonstrates that PRL facilitates a strong empirical trade-off between robustness and accuracy relative to other methods that seek to interpolate between the average and worst case.

\textbf{CVaR as a metric for test-time robustness.}  As we showed in Section~\ref{S:algorithm}, $\CVaR_{1-\rho}$ is an upper bound for the $\rhosup$.  In this way, CVaR can be used as a surrogate for assessing the test-time robustness of trained classifiers.  To this end, in Figure~\ref{fig:test-time-cvar} we plot $\CVaR_{0.95}(\ell(h(x+\delta),y); \fkr)$ averaged over the test data on CIFAR-10, SVHN, and MNIST.  This plot shows that PRL displays significantly lower values of $\CVaR_{0.95}$ among all of the algorithms we considered, which reinforces the message from Tables~\ref{tab:cifar-accs}-\ref{tab:mnist-accs} that PRL is most successful at imposing probabilistic robustness.

\begin{figure}
    \centering
    \includegraphics[width=0.9\columnwidth]{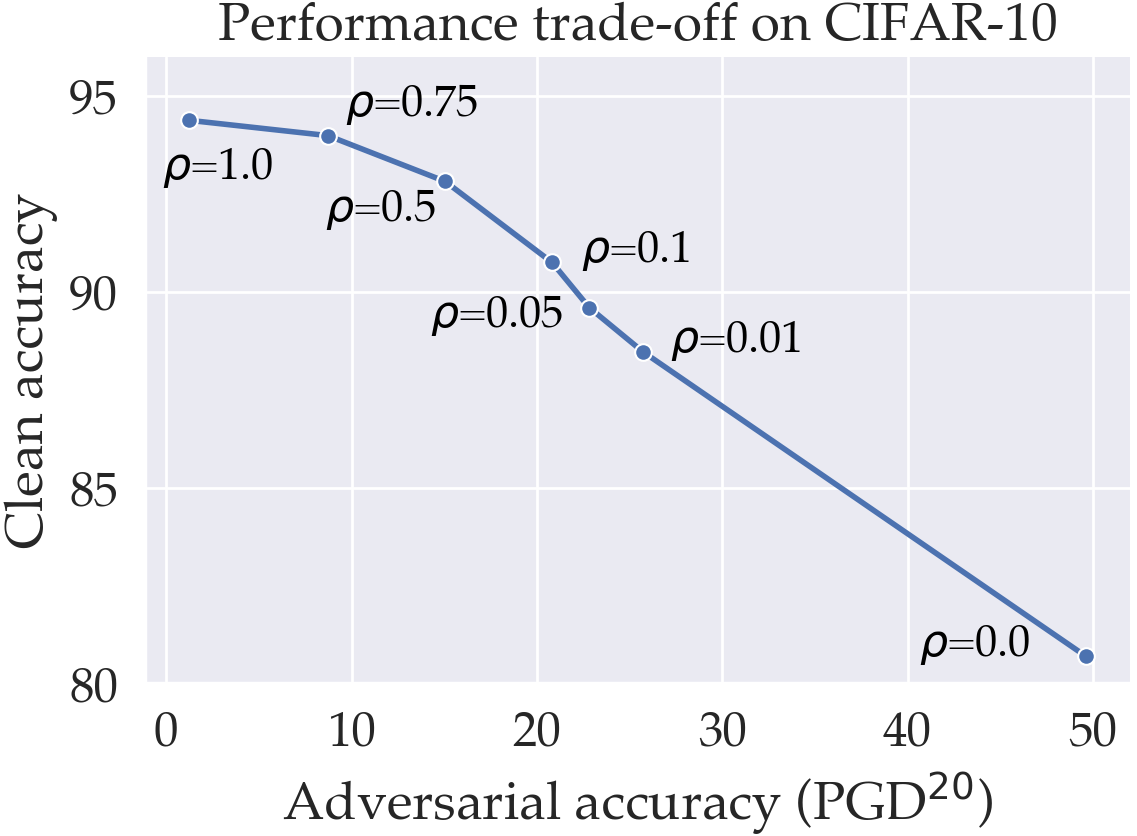}
    \caption{\textbf{Trade-offs between adversarial and clean accuracy.}  By sweeping over $\rho$, we show that our approach bridges the average and worst case by trading-off clean and adversarial accuracy.  Thus, as $\rho$ decreases, trained classifiers improve robustness to adversarial perturbations at the cost of decreasing clean performance.}
    \label{fig:acc-vs-rob}
    \vspace{-0.5em}
\end{figure}

\textbf{Ablation study: the role of $\rho$ in Algorithm~\ref{alg:cvar-sgd}.}  In Section~\ref{S:algorithm}, we claimed that our algorithm interpolates between the average- and worst-case problems in~\eqref{eq:p-avg} and~\eqref{eq:p-rob} respectively.  To verify this claim, we study the trade-off between nominal accuracy and adversarial accuracy for varying values of $\rho$.  In Figure~\ref{fig:acc-vs-rob}, we show that as $\rho$ decreases, our algorithm improves adversarial accuracy at the cost of degrading nominal performance.

\section{Conclusion}

In this paper, motivated by the brittleness of ERM and the conservatism of adversarial training, we proposed a new framework called probabilistically robust learning in which robustness is enforced with high probability over perturbations rather than in the worst case.  Our analysis of this framework showed that PRL provably mitigates the well-known trade-off between robustness and accuracy and can have a sample complexity that is exponentially lower than that of adversarial training.  We also proposed an algorithm motivated by risk-aware optimization which shows strong performance on a variety of metrics designed to evaluate intermediate robustness between the average and worst case.  

There are numerous directions for future work.  One fruitful direction is to further explore the use of risk-aware optimization in other areas of learning.  Although these techniques have been applied to problems in reinforcement learning~\cite{chow2015risk} and active learning~\cite{curi2019adaptive}, there is ample opportunity to use these methods to improve performance in fields like domain generalization, domain adaptation, and fair learning.  Another exciting direction is to devise tighter and/or more efficient schemes to optimize the $\rhosup$ in~\eqref{eq:p-prl}.  Furthermore, although we focused on additive, norm-bounded perturbations in this paper, there is ample opportunity to extend this framework to deal with more general distribution shifts, such as those studied in~\cite{robey2020model,robey2021model,wong2020learning,gowal2020achieving,zhou2022deep}.

A final direction of interest is to study the connections between alternative definitions of robustness, including both~\cite{li2020tilted,li2021tilted} and~\cite{rice2021robustness} as well as existing notions of astuteness~\cite{wang2018analyzing} and of neighborhood-preserving Bayes optimal classifiers~\cite{bhattacharjee2021consistent} for non-parametric methods.  Indeed, a unification of these robustness frameworks may represent a significant advance in our understanding of the robustness of machine learning models.

\section*{Acknowledgements}

The research of Hamed Hassani and Alexander Robey is supported by NSF grants 1837253, 1943064, AFOSR grant FA9550-20-1-0111, DCIST-CRA, and the AI Institute for Learning-Enabled Optimization at Scale (TILOS).  George J.\ Pappas and Alexander Robey are also supported by NSF grant 2038873 as well as the NSF-Simons Foundation’s Mathematical and Scientific Foundations of Deep Learning (MoDL) program on Transferable, Hierarchical, Expressive, Optimal, Robust, Interpretable NETworks (THEORINET).

\newpage

\newpage

\bibliography{bibliography}
\bibliographystyle{icml2022}

\newpage
\onecolumn
\appendix

\section{Proofs concerning trade-offs in binary classification and linear regression}

\subsection{Binary classification under a Gaussian mixture model}

\begin{figure}
    \centering
    \includegraphics[width=0.3\textwidth]{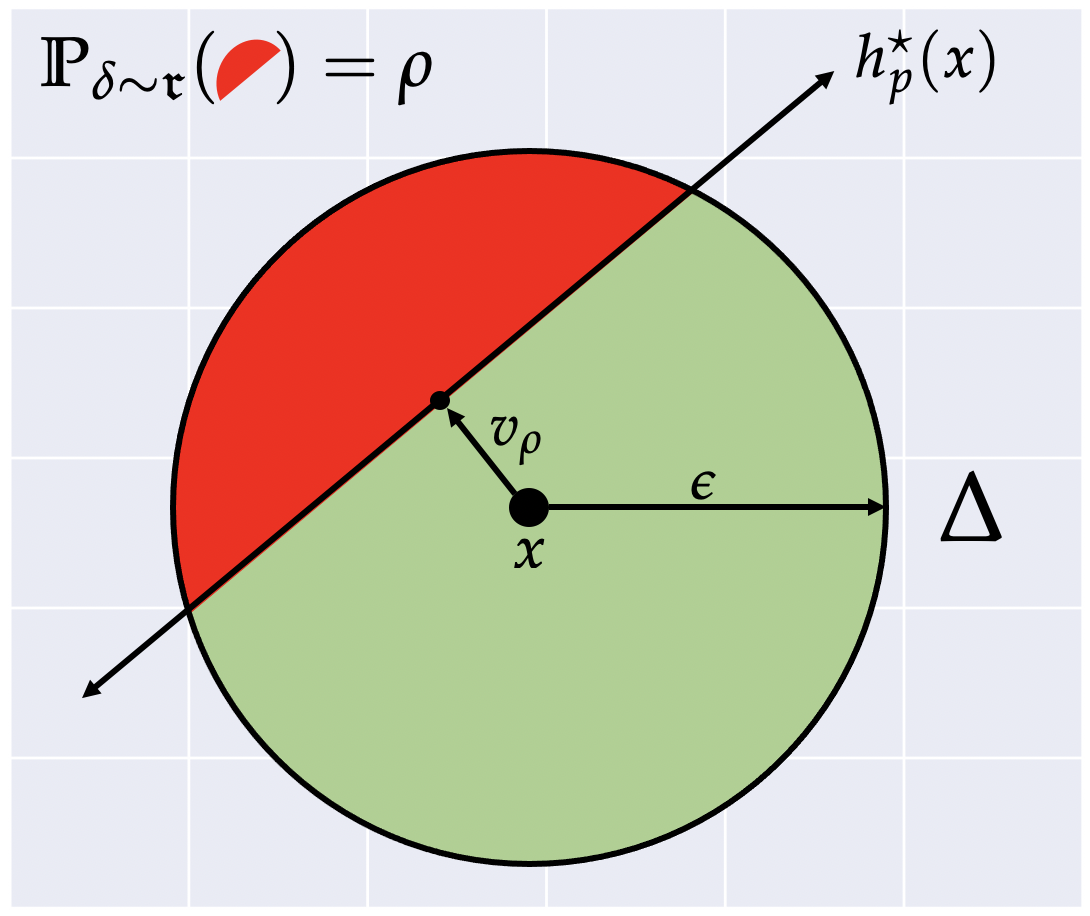}
    \caption{\textbf{Spherical cap of an $\ell_2$-ball with radius $\epsilon>0$ in two-dimensions.}}
    \label{fig:spherical-cap}
\end{figure}

In this subsection, we provide proofs for the results for binary classification in Section~\ref{S:tradeoff}.  In general, our proof of the closed form expression for $h^\star_p$ in Proposition~\ref{prop:opt-beta-rob-gaussian} follows along the same lines as the proof of Theorem 4.2 in~\cite{dobriban2020provable}.  In particular, our contribution is to generalize the proof techniques to the setting of probabilistic robustness, thereby subsuming the results in~\cite{dobriban2020provable} as a special case when $\rho=0$.

\begin{lemma}
For any $\rho\in[0, 1/2]$, it holds that among all linear classifiers,
\begin{align}
    h^\star(x) =
        \sign\left(x^\top \mu\left(1 - \frac{v_\rho}{\norm{\mu}_2} \right)_+ - \frac{q}{2}\right)
\end{align}
is optimal for~\eqref{eq:p-prl}, where $v_\rho$ is the distance from the center of $\Delta$ to a spherical cap of volume $\rho$.
\end{lemma}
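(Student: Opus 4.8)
The plan is to collapse the $\rhosup$ to a volume constraint and then reduce everything to a one-dimensional Gaussian optimization. First I would note that, for the $0$-$1$ loss, Definition~\ref{D:rho_esssup} gives, for every fixed $(x,y)$,
\[
\rhosup_{\delta\sim\fkr}\,\indicator[h(x+\delta)\neq y]
= \indicator\!\left[\Prob_{\delta\sim\fkr}\!\left[h(x+\delta)\neq y\right] > \rho\right],
\]
so that minimizing~\eqref{eq:p-prl} coincides with minimizing $\Prob_{(x,y)}\!\left[\Prob_{\delta\sim\fkr}[h(x+\delta)\neq y]>\rho\right]$ as in~\eqref{eq:01-loss-unconst}. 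For a linear rule $h(x)=\sign(x^\top w - b)$ the local error set $\{\delta : h(x+\delta)\neq y\}$ is a half-space in $\delta$, and since $\fkr$ is uniform on $\Delta$ its measure depends only on the signed distance $(b-x^\top w)/\norm{w}_2$ from the center of $\Delta$ to the separating hyperplane. The geometric crux is that this measure equals a strictly increasing function $G$ of that signed distance with $G(0)=\tfrac12$, and that, by the definition of $v_\rho$ as the distance from the center of $\Delta$ to a spherical cap of volume $\rho$, one has $G(-v_\rho)=\rho$.

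Using the scale invariance of $\sign(\cdot)$ I would normalize $\norm{w}_2=1$ (note $v_\rho$ depends only on $\Delta$, not on $w$), so that the requirement $\Prob_\delta[\text{error}]\le\rho$ becomes the pair of margin conditions $x^\top w \ge b + v_\rho$ when $y=+1$ and $x^\top w \le b - v_\rho$ when $y=-1$. Since $x^\top w \mid y \sim \mathcal{N}(y\,\mu^\top w, 1)$, writing $m=\mu^\top w$ the probabilistically robust risk becomes
\[
\PR(w,b;\rho) = \pi\,\Phi\!\left(b+v_\rho-m\right) + (1-\pi)\,\Phi\!\left(-b+v_\rho-m\right),
\]
which depends on $w$ only through $m$. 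Each summand is decreasing in $m$, hence so is $\min_b \PR(w,b;\rho)$; by Cauchy--Schwarz $m\le\norm{\mu}_2$ with equality exactly at $w^\star=\mu/\norm{\mu}_2$, which fixes the optimal direction.

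It then remains to minimize over $b$ with $m=\norm{\mu}_2$. The stationarity condition $\pi\,\phi(b+v_\rho-\norm{\mu}_2)=(1-\pi)\,\phi(b-v_\rho+\norm{\mu}_2)$, after taking logarithms of the two Gaussian densities, reduces to $2b(\norm{\mu}_2-v_\rho)=q$, i.e.\ $b^\star = q/\big(2(\norm{\mu}_2-v_\rho)\big)$; checking that the derivative passes from negative to positive at $b^\star$ confirms it is the global minimizer. Substituting $w^\star,b^\star$ into $\sign(x^\top w^\star - b^\star)$ and multiplying the argument by $\norm{\mu}_2-v_\rho>0$ (positive because $v_\rho\le v_0=\epsilon<\norm{\mu}_2$, so the $(\cdot)_+$ is inactive) yields exactly $\sign\!\big(x^\top\mu(1-v_\rho/\norm{\mu}_2)_+-q/2\big)$, as claimed. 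The main obstacle I anticipate is the geometric reduction: rigorously justifying the spherical-cap identity (monotonicity of the cap volume and the identification of the threshold with $v_\rho$), together with the interchange that lets the direction be optimized before the bias, via the observation that taking $\min_b$ preserves monotonicity in $m$. The Gaussian bias computation and the final algebraic simplification are routine once these pieces are in place.
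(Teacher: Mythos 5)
Your proof is correct, and it reaches the same classifier as the paper while being considerably more self-contained. Both arguments rest on the same geometric crux: for a linear rule under the uniform distribution on the $\ell_2$-ball, the local error set is a half-space, so the tolerance-$\rho$ requirement is exactly a margin condition at distance $v_\rho$ from the decision boundary --- equivalently, a shift of the class means inward by $v_\rho$. The paper exploits this by asserting that the problem ``can be easily reduced to a one-dimensional problem,'' solving that problem by citing the classical Bayes-optimal threshold between two Gaussians with shifted means (Anderson), and lifting back to $d$ dimensions via the spherical-cap picture; its one-dimensional formulas even conflate the tolerance $\rho$ with the distance $v_\rho$, writing $c+\rho$ and $\mu-\rho$ where a distance is meant. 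You instead carry out the full $d$-dimensional optimization explicitly: after the margin reduction you write the risk as $\pi\Phi(b+v_\rho-m)+(1-\pi)\Phi(-b+v_\rho-m)$ with $m=\mu^\top w$ and $\norm{w}_2=1$, fix the optimal direction $w^\star=\mu/\norm{\mu}_2$ by monotonicity in $m$ together with Cauchy--Schwarz (correctly observing that taking $\min_b$ preserves this monotonicity), and re-derive the optimal bias $b^\star=q/\big(2(\norm{\mu}_2-v_\rho)\big)$ from the stationarity condition rather than by citation. Your route thus supplies rigor exactly where the paper is sketchiest --- the dimension-reduction step is proved rather than asserted --- at the cost of a slightly longer computation. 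One caveat applies to both arguments: they require the standing hypothesis $\epsilon<\norm{\mu}_2$ of Proposition~\ref{prop:opt-beta-rob-gaussian}, so that $\norm{\mu}_2-v_\rho>0$ and the $(\cdot)_+$ is inactive; you flag this explicitly, and it is the regime in which the lemma is invoked.
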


\begin{proof}
To begin, observe that the the probabilistically robust risk $\PR(h_p;\rho)$ can be written in the following way:
\begin{align}
    \PR(h_p;\rho) &= \E_{(x,y)\sim\fkD} \Big[\indicator\big[
		\Prob_{\delta \sim \fkr} \left[ h(x+\delta) \neq y \right] > \rho
	\big]\Big] \\
	&= \Prob_{(x,y)\sim\fkD}\Big[ \Prob_{\delta \sim \fkr} \left[ h(x+\delta) \neq y \right] > \rho
	 \Big] \\
	 &= \Prob[y=+1] \cdot \Prob_{x|y=+1}\Big[\Prob_{\delta \sim \fkr} \left[ h(x+\delta) =-1 \right] > \rho\Big] \\
	 &\qquad + \Prob[y=-1] \cdot \Prob_{x|y=-1}\Big[\Prob_{\delta \sim \fkr} \left[ h(x+\delta) =+1 \right] > \rho\Big].
\end{align}
Note that it is enough to solve this problem in one dimension, as the problem in $d$-dimensions can be easily reduced to a one-dimensional problem.  Thus, our goal is to find the value of a threshold $c$ that minimizes the probabilistically robust risk.  In this one-dimensional case, the probabilistically robust risk can be written as
\begin{align}
    \PR(h_p;\rho) &= \pi \cdot \Prob_{x|y=+1}[x\leq c+\rho] + (1-\pi) \cdot \Prob_{x|y=-1}[x\geq c-\rho] \\
    &= \pi \cdot \Prob_{x|y=+1}[x-\rho\leq c] + (1-\pi) \cdot \Prob_{x|y=-1}[x+\rho\geq c].
\end{align}
Recall that as $x|y\sim\mathcal{N}(y\mu, \sigma^2 I)$.  Therefore,
\begin{align}
    \PR(h_p;\rho) = \pi \cdot \Prob_{x\sim\calN(\mu-\rho, \sigma^2 I)}[ x \leq c] + (1-\pi) \cdot \Prob_{x\sim\calN(-\mu+\rho, \sigma^2 I)}[ x \geq c]
\end{align}
This is exactly the same as the problem of non-robust classification between tow Gaussians with means $\mu^\prime=\mu-\rho$ and $-\mu^\prime$ (by assumption, we have that $\mu\geq 0$).  As is well known (see, e.g.,~\cite{anderson1958introduction}), the optimal classifier for this setting is
\begin{align}
    h^\star_p(x) = \sign[ x\cdot(\mu-\rho) - q/2] \label{eq:opt-lin}
\end{align}
where $q = \ln[(1-\pi)/\pi]$.  And indeed, when moving from the one-dimensional case, one need only recognize that a linear classifier which ignores a set of volume $\rho$ in $\Delta$ will
create a spherical cap of volume $\rho$ in $\Delta$.  This is illustrated by the red region in Figure~\ref{fig:spherical-cap}.  The form of $h^\star_p(x)$ given in~\eqref{eq:mix-gaussians-opt-classifier} follows from~\eqref{eq:opt-lin} as a direct analog for the $d$-dimensional case.
\end{proof}

To prove the second part of Proposition~\ref{prop:opt-beta-rob-gaussian}, we seek to characterize the distance $v_\rho$ from the center of $\Delta$ to a spherical cap of volume $\rho$.  To this end, we have the following result.

\begin{lemma}
Let $B(0,\gamma) = \{\delta\in\Delta : \norm{\delta}_2\leq \gamma\}$ for any number $\gamma >0$.  Define $v_\rho$ to be the distance from the origin such that the fraction of the volume of the corresponding spherical cap is $\rho$ (see Figure~\ref{fig:spherical-cap}).  Then we have that
\begin{align}
    v_\rho = \begin{cases}
        \epsilon &\quad \rho = 0 \\
        \frac{\epsilon}{\sqrt{d}} \Phi^{-1}(1-\rho)(1-o_d(1)) &\quad \rho\in(0,1/2]
    \end{cases}
\end{align}
\end{lemma}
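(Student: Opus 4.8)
The plan is to reduce the $d$-dimensional geometric question to a one-dimensional tail computation for the marginal of the uniform distribution on the ball, and then invoke the classical Gaussian limit for that marginal. The case $\rho=0$ is immediate: a cap of zero volume degenerates to the farthest point of $\Delta$ from the center along the splitting direction, so $v_0=\epsilon$. For $\rho\in(0,1/2]$, by the rotational symmetry of $\Delta$ I would fix the splitting hyperplane to be orthogonal to $e_1$ without loss of generality, so that the cap is $\{x\in\Delta : x_1\geq v_\rho\}$. If $U=(U_1,\dots,U_d)$ denotes a uniform point in the unit ball $B(0,1)\subset\R^d$, then by the scaling $\Delta=\epsilon\,B(0,1)$ the volume fraction of this cap equals $\Prob[\,U_1\geq v_\rho/\epsilon\,]$. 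Writing $s_\rho:=v_\rho/\epsilon$, the defining condition becomes $\Prob[\,U_1\geq s_\rho\,]=\rho$; that is, $s_\rho$ is the upper-$\rho$ quantile of the marginal $U_1$, and it remains to show $\sqrt{d}\,s_\rho\to\Phi^{-1}(1-\rho)$.

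Next I would use the explicit marginal density of $U_1$, obtained by integrating out the remaining coordinates over a $(d-1)$-ball of radius $\sqrt{1-t^2}$:
\[
f_d(t)=\frac{1}{B\!\left(\tfrac12,\tfrac{d+1}{2}\right)}\,(1-t^2)^{(d-1)/2},\qquad t\in[-1,1],
\]
so that $\Prob[U_1\geq s]=\big(\int_s^1 (1-t^2)^{(d-1)/2}\,dt\big)\big/\big(\int_{-1}^1(1-t^2)^{(d-1)/2}\,dt\big)$. The key asymptotic is the Poincar\'e/Maxwell limit $\sqrt{d}\,U_1\Rightarrow\mathcal{N}(0,1)$ as $d\to\infty$, consistent with $\E[U_1^2]=1/(d+2)$. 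This can be obtained by Laplace's method: substituting $t=u/\sqrt{d}$ and using $(1-t^2)^{(d-1)/2}=\exp\!\big[\tfrac{d-1}{2}\log(1-t^2)\big]=\exp\!\big[-\tfrac{u^2}{2}(1+o_d(1))\big]$ uniformly for $u$ in compact sets turns both integrals into Gaussian integrals after rescaling, yielding $\Prob[\sqrt{d}\,U_1\geq c]\to 1-\Phi(c)$ for every fixed $c$.

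Finally I would invert this convergence. Because the limiting law $\mathcal{N}(0,1)$ has a continuous, strictly increasing distribution function, pointwise convergence of the tail functions $c\mapsto\Prob[\sqrt{d}\,U_1\geq c]$ to the continuous limit $c\mapsto 1-\Phi(c)$ upgrades (by P\'olya's theorem) to uniform convergence on compacts and forces convergence of the associated quantiles. Hence the solution $c_d:=\sqrt{d}\,s_\rho$ of $\Prob[\sqrt{d}\,U_1\geq c_d]=\rho$ satisfies $c_d\to\Phi^{-1}(1-\rho)$, i.e.\ $\sqrt{d}\,s_\rho=\Phi^{-1}(1-\rho)\,(1-o_d(1))$. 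Multiplying by $\epsilon$ yields $v_\rho=\tfrac{\epsilon}{\sqrt{d}}\,\Phi^{-1}(1-\rho)\,(1-o_d(1))$, as claimed; note that $\rho\leq 1/2$ guarantees $\Phi^{-1}(1-\rho)\geq 0$, consistent with $v_\rho\geq 0$, and the endpoint $\rho=1/2$ recovers $v_{1/2}=0$.

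The main obstacle is making the $o_d(1)$ rigorous, i.e.\ controlling the tail integral $\int_{s}^1(1-t^2)^{(d-1)/2}\,dt$ near the quantile $s_\rho=\Theta(1/\sqrt{d})$ finely enough to obtain quantile convergence rather than merely convergence in distribution. The delicate point is that the relevant $s$ shrinks at exactly the central-limit scale $1/\sqrt{d}$, so one must verify that the error in the expansion $\tfrac{d-1}{2}\log(1-t^2)=-\tfrac{d-1}{2}t^2+O(d\,t^4)$ is negligible precisely on the region $t=O(1/\sqrt{d})$ that carries the mass, while the contribution of $t$ bounded away from $0$ is exponentially small in $d$ and is readily discarded.
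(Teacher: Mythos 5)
Your proof is correct, and it reaches the same underlying fact as the paper --- that a single coordinate of a uniform point in the $\epsilon$-ball is approximately $\epsilon Z/\sqrt{d}$ with $Z\sim\mathcal{N}(0,1)$ --- but by a different technical route. The paper proceeds in two approximation steps: it first shows that the uniform measure on the ball concentrates in a thin shell near the boundary sphere (comparing the volume of $B(0,\epsilon)$ and $B(0,\epsilon-\gamma)$ with $\gamma=\epsilon\ln(d)/d$), and then invokes the classical Poincar\'e--Borel fact that a coordinate of a uniform point \emph{on the sphere} of radius $\epsilon$ is approximately $\epsilon Z/\sqrt{d}$; it then sets $1-\Phi\bigl(v_\rho\sqrt{d}/\epsilon\bigr)=\rho$ and solves. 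You instead bypass the sphere entirely: you write down the exact marginal density $f_d(t)\propto(1-t^2)^{(d-1)/2}$ of the ball coordinate, obtain the Gaussian limit by Laplace's method at the scale $t=u/\sqrt{d}$, and --- importantly --- you make explicit the inversion step, using the continuity and strict monotonicity of $\Phi$ (P\'olya's theorem) to upgrade convergence of tail probabilities to convergence of the quantiles $c_d=\sqrt{d}\,s_\rho\to\Phi^{-1}(1-\rho)$. That last step is exactly where the paper is informal (it simply equates the approximate tail with $\rho$ and solves), and it is also the step you correctly flag as the main point requiring care, since the quantile sits at the central-limit scale $\Theta(1/\sqrt{d})$. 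In short: the paper's argument is shorter and leans on a known classical lemma plus a shell-concentration estimate, while yours is more self-contained and more rigorous about passing from distributional convergence to the asymptotics of $v_\rho$ itself; both are valid, and your handling of the $\rho=0$ and $\rho=1/2$ endpoints matches the statement.
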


\begin{proof}
By inspection, the result is clear for $\rho=0$.  Thus, we consider the case when $\rho\in[0,1/2]$.  Note that for any number $\gamma < \epsilon$, we have that
\begin{align}
    \frac{\Prob_{\delta\sim\fkr}(B(0, \epsilon)) - \Prob_{\delta\sim\fkr}(B(0, \epsilon-\gamma))}{\Prob_{\delta\sim\fkr}(B(0, \epsilon))} = 1 - \left(1-\frac{\gamma}{\epsilon}\right)^d.
\end{align}
As a result, by taking $\gamma = \epsilon \cdot (\ln(d)/d)$ we obtain
\begin{align}
    1 - \left(1-\frac{\gamma}{\epsilon}\right)^d &= 1 - \exp\left( d\ln\left(1-\frac{\gamma}{\epsilon}\right)\right) \\
    &= 1 - \exp\left( -\frac{d\gamma}{\epsilon} + O\left(d\left(\frac{\gamma}{\epsilon}\right)^2\right)\right) \\
    &= 1 - O(1/d).
\end{align}
In this way, we have shown that the uniform distribution over any ball centered at the origin can be approximated up to $o_d(1)$ by the uniform distribution on the sphere.  

Now let $(X_1, \cdots, X_n)$ be the random vector generated by uniformly sampling a point on the sphere of radius $\epsilon$.  We note that, up to $o_d(1)$ terms, the distribution of each of the coordinates, e.g. $X_1$, is $\epsilon Z/\sqrt{d}$, where $Z$ is the normal random variable.  Again, up to $o_d(1)$ terms, i.e. when the dimension grows large, the volume of the spherical cap at distance $v_\rho$ can be approximated by 
\begin{align}
    \Prob(X_1 \geq v_\rho) = \Prob\left(Z \frac{\epsilon}{\sqrt{d}} \geq v_\rho\right) = 1 - \Phi\left(v_\rho\, \frac{\sqrt{d}}{\epsilon}\right).  
\end{align}
where $\Phi$ denotes the Gaussian CDF.  As a result, for the RHS of the above relation to be equal to $\rho$, we must have 
\begin{align}
    v_\rho = \frac{\epsilon \,\Phi^{-1}(1 - \rho)}{\sqrt{d}}. 
\end{align}
This concludes the proof.
\end{proof}

From the above lemma, we can conclude the following phase-transition behavior. When $\rho = 0$, there is a constant gap between the adversarially robust risk $\AR(h_r^\star)$ and the best attainable clean risk $SR(h^\star_\textup{Bayes})$. Indeed, this gap does not vanish as the dimension $d$ grows, resulting in a non-trivial trade-off between adversarial robustness and accuracy. However, for $\rho > 0$, the gap between the probabilistically robust accuracy $\PR(h_p^\star;\rho)$ and the clean risk is of the form $\PR(h_p^\star; \rho) - \SR(h_\text{Bayes}^\star) = O(1/\sqrt{d})$, and as a result, as the dimension $d$ grows, the trade-off between robustness and accuracy vanishes  (a blessing of high dimensions).

\subsection{Linear regression with Gaussian features} \label{app:lin-regression}

We next consider the setting of linear regression, wherein it is assumed that there exists an underlying parameter vector $\theta_0\in\Theta\subset\R^d$, and that the data is subsequently generated according to the following model:
\begin{align}
    x\sim\calN(0, I_d), \quad y = \theta_0^\top x + z, \quad z\sim\calN(0, \sigma^2) \label{eq:lin-reg-data}
\end{align}
where $\sigma>0$ is a fixed noise level. Furthermore, we consider hypotheses of the form $f_\theta(x) = \theta^\top x$ for $\theta\in\Theta$, and we use the squared loss $\ell(f_\theta(x),y) = (f_\theta(x) - y)^2 = (\theta^\top x - y)^2$.  In this setting, it is straightforward to calculate that at optimality $\SR(f_{\theta}) = \sigma^2$, which is achieved for~$\theta=\theta_0$.  Moreover, in the more general probabilistic robustness setting, we have the following complementary result:
\begin{proposition}
Suppose that the data is distributed according to~\eqref{eq:lin-reg-data}.  Let $\theta^\star\in\Theta$ denote the optimal solution obtained by solving~\eqref{eq:p-prl} over $\Theta$.  Then for any $\rho > 0$,
\begin{align}
    \PR(f_{\theta^\star}; \rho) - \SR(f_{\theta_0}) = \begin{cases}
        O(1/\sqrt{d}) &\: \rho > 0 \\
        O(1) &\: \rho = 0
    \end{cases}
\end{align}
\end{proposition}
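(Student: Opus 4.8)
The plan is to reduce the inner $\rhosup$ to a one-dimensional quantile of the squared residual, expand that quantile in the dimension $d$, integrate over $(x,y)$, and finally combine optimality of $\theta^\star$ with compactness of $\Theta$. First I would fix $\theta\in\Theta$ and an instance $(x,y)$, set the residual $r := \theta^\top x - y$, and note that $\ell(f_\theta(x+\delta),y) = (r+\theta^\top\delta)^2$ depends on $\delta$ only through the scalar $s := \theta^\top\delta$, with $|s|\leq\epsilon\norm{\theta}_2$ by Cauchy--Schwarz. By rotational invariance and the concentration argument already established in the preceding lemma (the projection of the uniform law on $\Delta$ onto a fixed direction), the law of $s$ is, up to $o_d(1)$ corrections, that of $\epsilon\norm{\theta}_2\, Z/\sqrt{d}$ with $Z\sim\calN(0,1)$; write $\tau := \epsilon\norm{\theta}_2/\sqrt{d}$ for this effective scale. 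Since $r = (\theta-\theta_0)^\top x - z$ is mean-zero Gaussian, $r\sim\calN\big(0,\SR(f_\theta)\big)$ with $\SR(f_\theta)=\norm{\theta-\theta_0}_2^2+\sigma^2$.

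For $\rho\in(0,1)$, the definition of $\rhosup$ via the sets $U_\rho$ identifies $\rhosup_{\delta\sim\fkr}(r+s)^2$ with the $(1-\rho)$-quantile of $(r+s)^2$. For fixed $r\neq 0$ and $d$ large, $\tau\ll|r|$, so $r+s$ keeps the sign of $r$ throughout the bulk of the law of $s$ and $s\mapsto(r+s)^2$ is monotone there; hence the quantile equals $(|r|+\tau\Phi^{-1}(1-\rho))^2$ to leading order, giving
\[
\rhosup_{\delta\sim\fkr}\ell\big(f_\theta(x+\delta),y\big) = r^2 + 2|r|\,\tau\,\Phi^{-1}(1-\rho) + O(\tau^2).
\]
The non-monotone regime $|r|\lesssim\tau$ contributes only on a set of probability $O(\tau)$ on which the value is $O(\tau^2)$, and so is negligible after integrating. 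Taking $\E_{(x,y)}$ and using the Gaussianity of $r$ (so $\E[r^2]=\SR(f_\theta)$ and $\E|r|=\sqrt{2/\pi}\,\SR(f_\theta)^{1/2}$) yields
\[
\PR(f_\theta;\rho) = \SR(f_\theta) + \frac{2\epsilon\norm{\theta}_2}{\sqrt{d}}\,\Phi^{-1}(1-\rho)\sqrt{\tfrac{2}{\pi}}\,\SR(f_\theta)^{1/2} + O(1/d).
\]

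Because $\Theta$ is compact, $\norm{\theta}_2$ and $\SR(f_\theta)$ are bounded uniformly, so the correction term is $O(1/\sqrt{d})$ uniformly in $\theta$ (with a sign that may be negative when $\rho>\nicefrac12$, but always $O(1/\sqrt d)$ in magnitude). The upper bound then follows from optimality, $\PR(f_{\theta^\star};\rho)\leq \PR(f_{\theta_0};\rho)=\sigma^2+O(1/\sqrt d)$, using $\SR(f_{\theta_0})=\sigma^2$; the lower bound follows from $\PR(f_\theta;\rho)\geq \SR(f_\theta)-O(1/\sqrt d)\geq\sigma^2-O(1/\sqrt d)$ uniformly over $\Theta$. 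Combining gives $\PR(f_{\theta^\star};\rho)-\SR(f_{\theta_0})=O(1/\sqrt d)$. For $\rho=0$, instead $\rhosup=\esssup=\sup_{\norm{\delta}_2\leq\epsilon}(r+\theta^\top\delta)^2=(|r|+\epsilon\norm{\theta}_2)^2$, so $\PR(f_\theta;0)=\SR(f_\theta)+2\epsilon\norm{\theta}_2\E|r|+\epsilon^2\norm{\theta}_2^2$; the term $\epsilon^2\norm{\theta}_2^2$ is $\Theta(1)$ and does \emph{not} vanish with $d$, while every term stays bounded on the compact $\Theta$, so the gap is $O(1)$ and bounded away from zero, mirroring the phase transition in the Gaussian-mixture setting.

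The main obstacle I anticipate is the quantile expansion in the second paragraph. One must control the Gaussian approximation of the law of $s$ \emph{uniformly} in $\theta\in\Theta$ and across the range of $r$, carefully treat the non-monotone regime $|r|\lesssim\tau$ (where $(r+s)^2$ is maximized at the endpoints of the support of $s$ rather than monotonically), and verify that the pointwise $o_d(1)$ and $O(\tau^2)$ error terms survive the outer expectation over $(x,y)$ and the optimization over $\Theta$. In particular, passing from the pointwise expansion of $\rhosup$ to the expansion of $\PR$ requires a dominated-convergence or uniform-integrability argument exploiting the sub-Gaussian tails of $r$; establishing this interchange rigorously is the technical crux, whereas the remaining algebra (moments of $|r|$, the optimality comparison, and the $\rho=0$ worst case) is routine.
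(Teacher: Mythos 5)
Your proposal is correct and follows essentially the same route as the paper's proof: both reduce the inner $\rhosup$ to the $(1-\rho)$-quantile of the squared residual (the paper via the variational epigraph form with $t(x,y)$), invoke the Gaussian approximation $\theta^\top\delta \approx \epsilon\norm{\theta}_2 Z/\sqrt{d}$ for the one-dimensional projection of the uniform ball measure, expand the quantile to obtain $\PR(f_\theta;\rho) = \SR(f_\theta) + O(1/\sqrt{d})$, and conclude by comparing with $\theta_0$, with the $\rho=0$ case handled through the exact supremum $(\abs{r}+\epsilon\norm{\theta}_2)^2$. If anything, your version is more careful than the paper's: you correctly carry $\abs{r}$ through the quantile expansion (the paper's lemma divides by the signed residual, a sign slip), you explicitly treat the non-monotone regime $\abs{r}\lesssim\tau$, and you supply the lower bound $\PR(f_{\theta^\star};\rho)\geq\sigma^2-O(1/\sqrt{d})$ that makes the optimality comparison two-sided, a step the paper leaves implicit.
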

In this way, as in the previous subsection, it holds that for any $\rho > 0$, the gap between probabilistic robustness and nominal performance vanishes in high dimensions.  On the other hand, as was recently shown in~\cite{javanmard2020precise}, there exists a non-trivial gap between adversarial robustness and clean accuracy that does not vanish to zero by increasing the dimension in this setting.

To prove this result, we consider the following variational form of the problem in~\eqref{eq:p-prl}:
\begin{prob}\label{eq:p-regression}
    &\min_{\theta \in \mathbb{R}^d, \: t\in L^1} &&\E_{(x,y)\sim\fkD} [t(x,y)] \\
    &\text{subject to} &&\Prob_{\delta\sim\fkr} \left\{ (\theta^{\sf T} (x+\delta) - y)^2 \leq t(x,y)\right\} \geq 1-\rho \quad\forall (x,y)\in\Omega.
\end{prob}
where $L^1$ denotes the space of Lebesgue integral functions.  We can then characterize $t(x,y)$ as follows:
\begin{lemma}
We have the following characterization for $t(x,y)$:
\begin{equation*}
t(x,y) = \left\{
\begin{array}{rl}
(|\theta^{\sf T} x - y| + \epsilon ||\theta||_2)^2 & \text{if } \rho  = 0,\\
(\theta^{\sf T} x - y)^2 + \frac{\epsilon^2 ||\theta||^2 (\theta^{\sf T} x - y)}{\sqrt{d}}  \left (Q^{-1}(1-\rho)  + o_d(1) \right ) & \text{if } \rho \in (0,  1],
\end{array} \right.
\end{equation*}
Almost surely for any $(x,y)$.

\end{lemma}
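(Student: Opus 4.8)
The plan is to observe that, by the variational characterization~\eqref{eq:p-regression} and Definition~\ref{D:rho_esssup}, the optimal epigraph variable is exactly the $\rho$-essential supremum of the pointwise loss,
\[ t(x,y) = \rhosup_{\delta\sim\fkr}\ \big(\theta^{\sf T}(x+\delta) - y\big)^2 . \]
Setting $r := \theta^{\sf T} x - y$ and $s := \theta^{\sf T}\delta$, the loss equals $(r+s)^2$ and depends on $\delta$ only through the scalar $s$. Hence the $d$-dimensional quantile problem collapses to a one-dimensional one, and everything hinges on the law of $s$ under the uniform measure $\fkr$ on the ball $\norm{\delta}_2\le\epsilon$.

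\emph{The case $\rho=0$.} Since $\fkr$ is fully supported on $\Delta$, the $0$-essential supremum is the genuine supremum. As $\delta$ ranges over the ball, $s=\theta^{\sf T}\delta$ ranges over $[-\epsilon\norm{\theta}_2,\epsilon\norm{\theta}_2]$, with the endpoints attained at $\delta=\pm\epsilon\,\theta/\norm{\theta}_2$. Maximizing $(r+s)^2$ selects the endpoint whose sign agrees with $r$, giving $t(x,y)=(|r|+\epsilon\norm{\theta}_2)^2$ as claimed.

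\emph{The case $\rho\in(0,1]$.} Here I would first determine the distribution of $s$ in high dimensions. By rotational invariance $s=\norm{\theta}_2\,(u^{\sf T}\delta)$ with $u=\theta/\norm{\theta}_2$ fixed, so $u^{\sf T}\delta$ is a one-dimensional projection of a uniform draw from the $\epsilon$-ball. Reusing the preceding spherical-cap lemma---uniform-on-the-ball agrees with uniform-on-the-sphere up to $o_d(1)$, and a single coordinate of the sphere of radius $\epsilon$ is distributed as $\epsilon Z/\sqrt{d}$ for standard normal $Z$---shows $s$ is, up to $o_d(1)$, $\mathcal{N}(0,\sigma_s^2)$ with $\sigma_s=\epsilon\norm{\theta}_2/\sqrt{d}$. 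I would then solve for the smallest $u$ with $\Prob_\delta\big[(r+s)^2>u\big]\le\rho$. Because $\sigma_s\to0$, only the tail $\{s>\sqrt{u}-r\}$ (when $r>0$, symmetrically otherwise) carries the mass $\rho$, so the threshold equation is $\sqrt{u}=|r|+\sigma_s\,\Phi^{-1}(1-\rho)$. Squaring and absorbing the $\sigma_s^2=O(1/d)$ term into the remainder gives
\[ t(x,y)=r^2+\frac{2\,|r|\,\epsilon\norm{\theta}_2}{\sqrt{d}}\big(\Phi^{-1}(1-\rho)+o_d(1)\big), \]
which matches the claimed structure, $Q^{-1}$ being the Gaussian quantile $\Phi^{-1}$ (up to the universal prefactor).

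The hard part will be turning the approximate Gaussianity of $s$ into control of an actual \emph{quantile}: a bare marginal CLT does not suffice, so I would invoke quantitative (Berry--Esseen or direct density) estimates on the projection marginal, proportional to $(1-t^2)^{(d-1)/2}$, to guarantee the error is $o_d(1)$ and---crucially---uniform over $(x,y)$ almost surely. Two subtleties I would flag. First, the opposite tail $\{s<-\sqrt{u}-r\}$ must be shown negligible; this is immediate when $|r|$ is bounded away from $\sigma_s$ but needs separate care on the (vanishing-measure) set where $|r|\lesssim\sigma_s$, which is harmless once one integrates to prove the $O(1/\sqrt d)$ bound in the proposition. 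Second, the correction is naturally proportional to the \emph{signed} residual only after the sign convention above is used to pick the aligned tail; the absolute value is what keeps the expression correct for both signs of $\theta^{\sf T} x - y$.
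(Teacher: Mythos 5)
Your proposal is correct and takes essentially the same route as the paper's proof: approximate the projection $\theta^{\sf T}\delta$ of the uniform measure on the $\epsilon$-ball by $\mathcal{N}\big(0,\epsilon^2\|\theta\|_2^2/d\big)$ in high dimensions, then solve the resulting Gaussian quantile equation for $t(x,y)$, with the $\rho=0$ case handled by direct maximization over the ball. In fact your write-up is more careful on two points where the paper is loose: you track the sign of the residual via $|r|$ (the paper divides by the signed quantity $\theta^{\sf T}x-y$ and so its displayed formula is only valid for a nonnegative residual; note also that the lemma's coefficient $\epsilon^2\|\theta\|^2$ is a typo for the $2\epsilon\|\theta\|_2$ the paper's own derivation produces), and you correctly flag that a bare convergence-in-distribution statement does not by itself control quantiles, which is a gap the paper's argument silently passes over.
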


\begin{proof}
Let's first consider the case in which $\rho > 0$. Since $\delta \sim \mathbb{P}_{\delta\sim\fkr}$ is the uniform distribution over the Euclidean ball of radius $\epsilon$, we know that for any $\theta \in \mathbb{R}^d$ we have
$$ \theta^{\sf T} \delta \stackrel{d}{\to} \mathcal{N}(0, \frac{\epsilon^2 ||\theta||_2^2}{d}),$$
where the convergence is in distribution. This is because the uniform distribution over the Euclidean ball of radius $\epsilon$ converges to the Gaussian distribution $N(0, \frac{\epsilon^2}{d} I_d)$. As a result, up to $o_d(1)$ terms, we have $\theta^{\sf T} \delta \sim \frac{\epsilon ||\theta||_2}{\sqrt{d}} Z $, where $Z$ is the normal random variable.  
\begin{align*}
   &\Prob_{\delta\sim\fkr}\left\{ (\theta^{\sf T} (x+\delta) - y)^2 \leq t(x,y)\right\}  \\ 
   &=  
    \Prob_Z\left\{ \frac{\epsilon^2 ||\theta||_2^2}{d} Z^2 + 2(\theta^{\sf T} x - y)\frac{\epsilon ||\theta||_2}{\sqrt{d}} Z + (\theta^{\sf T} x - y)^2 \leq t(x,y)  \right\} + o_d(1)\\
    & = \Prob_Z\left\{ 2(\theta^{\sf T} x - y)\frac{\epsilon ||\theta||_2}{\sqrt{d}} Z + (\theta^{\sf T} x - y)^2 \leq t(x,y)  \right\} + o_d(1) \\
    & = \Prob_Z\left\{  \frac{\epsilon ||\theta||_2}{\sqrt{d}} Z + (\theta^{\sf T} x - y)^2 \leq t(x,y)  \right\} + o_d(1) \\
    & = \Prob_Z\left\{   Z  \leq \sqrt{d}\frac{t(x,y) -  (\theta^{\sf T} x - y)^2}{2\epsilon ||\theta||_2 (\theta^{\sf T} x - y)} \right\} + o_d(1) \\
    & =  Q\left(  \sqrt{d}\frac{t(x,y) -  (\theta^{\sf T} x - y)^2}{2\epsilon ||\theta||_2 (\theta^{\sf T} x - y)} \right) + o_d(1) 
\end{align*}
where $Q$ is the quantile function for $Z$, i.e. the inverse of the normal CDF.  As a result, equating the above with $1-\rho$ we obtain
$$ t(x,y) = (\theta^{\sf T} x - y)^2 + \frac{2\epsilon ||\theta||_2 (\theta^{\sf T} x - y)}{\sqrt{d}} \left( Q^{-1}(1 - \rho) + o_d(1) \right).  $$

For the case $\beta = 0$, this is a simple optimization problem where the closed form solution provided in the lemma is its solution.
\end{proof}

From the above lemma, it is easy to see that for $\theta = \theta_0$, the value of the objective in \eqref{eq:regression} becomes $\sigma^2 + O(1/\sqrt{d})$ for any value of $\rho > 0$. This means that the gap between the probabilistically roust risk and the clean risk  is of the form $\PR(h_p^\star;\rho) - \SR(h_r) = O(1/\sqrt{d})$, and as a result, as the dimension $d$ grows, the trade-off between probabilistic robustness and accuracy vanishes. We further remark that for $\rho = 0$, i.e. the adversarial setting, there exists a non-trivial gap between the robust and clean accuracies that does not vanish to zero by increasing dimension. This is indeed clear from the above lemma, and it has also been shown in \cite{javanmard2020precise}. 
\newpage
\section{Learning theory proofs}\label{A:learning-theory-proofs}

We restate Proposition~\ref{T:sample_complexity} formally to detail what we mean by \emph{sample complexity}. Note that this is exactly the number of sample required for PAC learning.

\begin{proposition}
Consider the probabilistically robust learning problem~\eqref{eq:p-prl} with the 0-1 loss function and a robust measure~$\fkr$ fully supported on~$\Delta$ and absolutely continuous with respect to the Lebesgue measure~(i.e., non-atomic). Let~$P_r^\star$ be its optimal value and consider its empirical version
\begin{prob*}
	\hat{P}_r^\star = \min_{h_p \in \calH} \: \frac{1}{N} \sum_{n=1}^N
		\left[ \rhosup_{\delta \sim \fkr}\ \ell\big( h_p(x_n+\delta), y_n \big) \right]
		\text{,}
\end{prob*}
based on i.i.d.\ samples~$(x_n,y_n) \sim \fkD$. For any threshold~$\rho_o \in (0,0.5)$, there exists a hypothesis class~$\calH_o$ such that the sample complexity of probabilistically robust learning, i.e., the number of samples~$N$ needed for~$\big\vert P_r^\star - \hat{P}_r^\star \big\vert \leq \epsilon$ with high probability, is
\begin{equation*}
	N = \begin{cases}
		\Theta\big( \log_2(1/\rho_o)/\epsilon^2 \big) \text{,} &\rho = 0
		\\
		\Theta(1/\epsilon^2) \text{,} &\rho_o \leq \rho \leq 1-\rho_o
	\end{cases}
\end{equation*}
In particular, $\Theta(1/\epsilon^2)$ is the sample complexity of~\eqref{eq:nom-training}.
\end{proposition}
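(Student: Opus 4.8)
The plan is to reduce the statement to a computation of VC dimension. First I would observe that for the $0$-$1$ loss and any $\rho\in[0,1)$, the $\rhosup$ collapses to a Bernoulli threshold: writing $p_h(x,y)=\Prob_{\delta\sim\fkr}[h(x+\delta)\neq y]$, one checks directly from Definition~\ref{D:rho_esssup} that $\rhosup_{\delta\sim\fkr}\ell(h(x+\delta),y)=\indicator[p_h(x,y)>\rho]$. Hence $\PR(h;\rho)$ is the expectation of a genuine $\{0,1\}$-valued loss, and the induced loss class $\calL_\rho=\{(x,y)\mapsto\indicator[p_h(x,y)>\rho]:h\in\calH\}$ controls everything. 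By the fundamental theorem of statistical learning, the number of samples needed for $\sup_{h}|\PR(h;\rho)-\widehat{\PR}(h;\rho)|\le\epsilon$ --- and therefore for $|P_r^\star-\hat P_r^\star|\le\epsilon$ --- is $\Theta(\VC(\calL_\rho)/\epsilon^2)$, with the upper bound coming from uniform convergence and the lower bound from placing a distribution on a shattered set. So it suffices to exhibit $\calH_o$ with $\VC(\calL_0)=\Theta(\log_2(1/\rho_o))$ and $\VC(\calL_\rho)=\Theta(1)$ for $\rho\in[\rho_o,1-\rho_o]$.

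Next I would build $\calH_o$ from two decoupled gadgets living on disjoint regions of $\calX$, using that $\fkr$ is non-atomic and fully supported so that subsets of any prescribed measure exist inside each neighborhood $x+\Delta$. The \emph{adversarial gadget} consists of $k:=\log_2(1/\rho_o)$ points $x_1,\dots,x_k$; for a binary string $b\in\{0,1\}^k$ the hypothesis misclassifies inside the neighborhood of $x_i$ on a single ``trap'' of $\fkr$-measure exactly $\rho_o$ when $b_i=1$, and is correct on all of $x_i+\Delta$ when $b_i=0$. The \emph{nominal gadget} is a fixed threshold family of constant VC dimension that classifies each neighborhood homogeneously, so its misclassification probability is always $0$ or $1$ and hence $\rho$-independent. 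Taking $\calH_o=\{h_{b,a}\}$ with $b\in\{0,1\}^k$ and $a$ ranging over the constant-size nominal family gives $|\calH_o|=\Theta(1/\rho_o)$.

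Then I would read off the two VC dimensions. For $\rho=0$ the loss fires whenever the bad set has positive measure, so the loss vector of $h_{b,a}$ over $x_1,\dots,x_k$ is exactly $b$; letting $b$ range over $\{0,1\}^k$ realizes all $2^k$ patterns, so these $k$ points are shattered and $\VC(\calL_0)\ge k$, while the cardinality bound $\VC(\calL_0)\le\log_2|\calH_o|=k+O(1)$ matches, giving $\VC(\calL_0)=\Theta(\log_2(1/\rho_o))$. For $\rho\in[\rho_o,1-\rho_o]$ every trap has measure $\le\rho_o\le\rho$, so $\indicator[p_h>\rho]=0$ on the entire adversarial gadget: it becomes invisible, only the nominal gadget survives, and $\VC(\calL_\rho)=\Theta(1)$, equal to the VC dimension governing~\eqref{eq:nom-training}. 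Substituting into the reduction of the first paragraph yields the claimed $N$.

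The hard part is engineering the gadget so the two robustness notions decouple cleanly: the trap measures must sit exactly at the threshold $\rho_o$ --- strictly positive so the essential-supremum ($\rho=0$) adversary detects every trap and the full combinatorial richness appears, yet no larger than $\rho_o$ so that every tolerance $\rho\ge\rho_o$ ignores them and the class collapses. I also expect the careful points to be (i) making the nominal gadget's probabilistic loss genuinely $\rho$-independent across $(0,1)$, so that the $[\rho_o,1-\rho_o]$ complexity truly equals that of~\eqref{eq:nom-training}, and (ii) the lower-bound direction, where one must argue that estimating the optimal \emph{value} $P_r^\star$ (rather than merely learning a good $h$) inherits the $\Omega(\VC/\epsilon^2)$ rate; this follows from the standard shattered-set anti-concentration argument but should be stated explicitly.
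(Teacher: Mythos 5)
Your high-level plan is the same as the paper's: reduce sample complexity to the VC dimension of the induced loss class via the fundamental theorem of statistical learning, note that for the 0-1 loss $\rhosup_{\delta\sim\fkr}\ell(h(x+\delta),y)=\indicator[\Prob_{\delta\sim\fkr}[h(x+\delta)\neq y]>\rho]$, shatter the gadget centers with label $1$ at $\rho=0$, and use the cardinality bound $\VC\leq\log_2\abs{\calH_o}$ for the upper bound. However, your gadget itself has a genuine gap, and the "hard part" you identify (traps of measure \emph{exactly} $\rho_o$) is in fact the wrong design choice. First, at the endpoint $\rho=1-\rho_o$ your gadget is not invisible: for the label-$0$ points $(x_i,0)$ one has $\Prob_{\delta\sim\fkr}[h_b(x_i+\delta)\neq 0]=1-\rho_o\indicator[b_i=1]$, so the loss $\indicator[p>\rho]$ equals $\indicator[b_i=0]$ and all $k$ centers are shattered, giving $\VC(\calL_{1-\rho_o})=\Omega(\log_2(1/\rho_o))$ rather than $\Theta(1)$. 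Second, your collapse argument only inspects gadget points, but VC dimension quantifies over \emph{all} of $\calX\times\calY$: since you never require the $x_i$ to be separated by more than the diameter of $\Delta$ (only disjointness of neighborhoods), a point straddling two adjacent neighborhoods can overlap two traps with mass $2\rho_o/3$ each, its loss at tolerance $\rho=\rho_o$ becomes $\indicator[\abs{b\cap C}\geq 2]$ for its visible pair $C$, and taking disjoint pairs shatters $\Omega(k)$ such straddling points. The paper's construction avoids both failures at once because every hypothesis's \emph{total} error set, summed over all $m$ neighborhoods, has measure at most $m\cdot(\rho_o/m)=\rho_o$ -- a bound valid as seen from every point of $\calX$, not just the centers -- and this is precisely why the paper's case analysis runs over arbitrary pairs of points.

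Third, and most structurally, your class violates the proposition's final claim that the \emph{same} $\calH_o$ has nominal sample complexity $\Theta(1/\epsilon^2)$: under the nominal loss of~\eqref{eq:nom-training}, points placed inside your traps are shattered, since for $x_i'\in T_i$ with label $1$ one has $\indicator[h_b(x_i')\neq 1]=b_i$. Hence the nominal VC dimension of your class is already $\Omega(\log_2(1/\rho_o))$, i.e., classical learning over your class is as hard as adversarial learning, which defeats the point of the result, and your decoupled "nominal gadget" cannot repair damage done by the adversarial gadget itself. This is exactly what the paper's signature-indexed construction (adapted from Montasser et al.) is for: inside each $\Delta_i$ it places $2^{m-1}$ \emph{disjoint} sets $\calA_i^b$ of measure at most $\rho_o/m$, one for each hypothesis $h_b$ with $b_i=1$, and $h_b$ errs only on sets carrying its own signature $b$. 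Distinct hypotheses then have disjoint error sets inside each neighborhood, so no two points can be shattered by the nominal loss ($d_\text{VC}=1$) or by the probabilistic loss with $\rho\geq\rho_o$, while the $\rho=0$ shattering of the centers survives because each $\calA_i^b$ still has positive measure. In short: your reduction and two-regime plan are the paper's, but the one idea the construction actually turns on -- splitting each neighborhood's trap mass across per-hypothesis signatures so that both the probabilistic ($\rho\geq\rho_o$) and the nominal loss classes collapse simultaneously -- is missing, and the parameters you do specify provably fail.
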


\begin{proof}
Let us begin by reducing the task of determining the sample complexity of these problems to that of determining the VC dimension of their objectives:

\begin{lemma}[{\citep[Thm.\ 6.8]{shalev2014understanding}}]\label{L:sample_complexity}
	Consider
	\begin{equation}
	\begin{aligned}
	    P^\star = \min_{h \in \calH}& &&\E_{(x,y)\sim\fkD}\!\big[ g(h, x, y) \big]
        \\
	    \hat{P^\star} = \min_{h \in \calH}& &&\frac{1}{N} \sum_{n = 1}^N g(h, x_n, y_n)
	\end{aligned}
	\end{equation}
	where~$g: \calH \times \calX \times \calY \to \{0,1\}$ and the samples~$\{(x_n,y_n)\} \sim \fkD$ are i.i.d. The number of samples~$N$ needed for~$\big\vert P^\star - \hat{P}^\star \big\vert \leq \epsilon$ with probability~$1-\delta$ over the sample set~$\{(x_n,y_n)\}$ is
	\begin{equation*}
		C_1 \frac{d_\text{VC} + \log(1/\delta)}{\epsilon^2} \leq N \leq C_2 \frac{d_\text{VC} + \log(1/\delta)}{\epsilon^2}
			\text{,}
	\end{equation*}
	for universal constants~$C_1,C_2$. The VC dimension~$d_\text{VC}$ is defined the largest~$d$ such that~$\Pi(d) = 2^d$ for the \emph{growth function}
	\begin{equation*}
		\Pi(d) = \max_{\{(x_n,y_n)\} \subset (\calX \times \calY)^d}\ \abs{ \calS\big( \{(x_n,y_n)\} \big) }
			\text{,}
	\end{equation*}
	where~$\calS\big( \{(x_n,y_n)\} \big) = \big\{u \in \{0,1\}^m \mid \exists h \in \calH \ \text{such that}\ \allowbreak u_n = \ell\big( h(x_n), y_n \big) \big\}$.
\end{lemma}

We now proceed by defining~$\calH_o$ using a modified version of the construction in~\citep[Lemma~2]{montasser2019vc}. Let~$m = \ceil{\log_2(1/\rho_o)} + 1$ and pick~$\{c_1,\dots,c_m\} \in \calX^m$ such that, for~$\Delta_i = c_i + \Delta$, it holds that~$\Delta_i \cap \Delta_j = \emptyset$ for~$i \neq j$. Within each~$\Delta_i$, define~$2^{m-1}$ disjoint sets~$\calA_{i}^b$ of measure~$\fkr(\calA_{i}^b) \leq \rho_o/m$ labeled by the binary digits~$b \in \{0,1\}^m$ whose~$i$-th digit is one. In other words, $\Delta_1$ contains sets with signature~$1 \mathsf{b}_2 \dots \mathsf{b}_m$ and~$\Delta_3$ contains sets with signature~$\mathsf{b}_1 \mathsf{b}_2 1 \dots \mathsf{b}_m$. Observe that there are indeed~$2^{m-1}$ sets~$\calA_{i}^b$ within each~$\Delta_i$, that their signatures span all possible~$m$-digits binary numbers, and there are at most~$m$ sets with the same signature~(explicitly, for~$b = 11\dots1$). Additionally, note that it is indeed possible to fit the~$\calA_{i}^b$ inside each~$\Delta_i$ given that
\begin{equation*}
	2^{m-1} \cdot \dfrac{\rho_o}{m} < \dfrac{2^{\log_2(1/\rho_o) + 1} \rho_o}{\log_2(1/\rho_o) + 1}
		\leq \dfrac{2}{\log_2(1/\rho_o) + 1} \leq 1
\end{equation*}
for~$\rho_o < 0.5$, where we used the fact that~$\log_2(1/\rho_o) \leq \ceil{\log_2(1/\rho_o)} < \log_2(1/\rho_o) + 1$.

We can now construct the hypothesis class~$\calH_o = \{h_b \mid b \in \{0,1\}^m\}$ by taking
\begin{equation}\label{eq:hypothesis}
	h_b(x) = \begin{cases}
		1 \text{,} &x \notin \bigcup_{i = 1}^m \calA_{i}^b
		\\
		0 \text{,} &x \in \bigcup_{i = 1}^m \calA_{i}^b
	\end{cases}
\end{equation}

Let us proceed first for the probabilistically robust loss~$g\big( h, x, y \big) = \rhosup_{\delta \sim \fkr}\ \indicator\big[ h(x+\delta) \neq y \big]$.

We begin by showing that if~$\rho = 0$, then~$d_\text{VC} > m$. Indeed, consider the set of points~$\{(c_i,1)\}_{i=1,\dots,m} \subset (\calX \times \calY)^m$. In this case, the cardinality of~$\calS(\{(c_i,1)\})$ is~$2^m$, i.e., this set can be shattered by~$\calH$. Indeed, for any signature~$b \in \{0,1\}^m$, we have
\begin{equation*}
	\rhosup_{\delta \sim \fkr}\ \indicator[h_b(c_i + \delta) \neq 1] =
	\esssup_{\delta \sim \fkr}\ \indicator[h_b(c_i + \delta) \neq 1] = b_i
		\text{,} \quad \text{for } i = 1,\dots,m
		\text{,}
\end{equation*}
since~$h_b(c_i + \delta) = 0$ for all~$c_i + \delta \in \calA_i^b$, a set of positive measure. Using Lemma~\ref{L:sample_complexity}, we therefore conclude that~$N = \Theta\big( m/\epsilon^2 \big)$.

Let us now show that~$d_\text{VC} = 1$ for~$\rho \geq \rho_o$ by showing that~$\Pi(2) < 4$. To do so, we take two arbitrary points~$(x_1, y_1), (x_2, y_2) \in \calX \times \calY$ and proceed case-by-case. To simplify the exposition, let~$\calA = \cup_{i = 1}^m \cup_{b \in \{0,1\}^m} \calA_i^b$.

\begin{itemize}
	\item Suppose that~$(x_1 + \Delta) \cap \calA = \emptyset$. Then, observe from~\eqref{eq:hypothesis} that~$h(x_1 + \delta) = 1$ for all~$h \in \calH$ and~$\delta \in \Delta$. Hence, for all~$h \in \calH$, we obtain
	\begin{equation*}
		\rhosup_{\delta \sim \fkr}\ \indicator\big[ h(x_1+\delta) \neq y_1 \big] = \indicator\big[ 1 \neq y_1 \big]
			\text{.}
	\end{equation*}
	Hence, depending on the value of~$y_1$, $\calS\big( \{(x_1, y_1), (x_2, y_2)\} \big)$ can either contain sets of the form~$(0,q)$ or~$(1,q)$, for~$q = \{0,1\}$, but not both. For such points, we therefore have~$\abs{\calS\big( \{(x_1, y_1), (x_2, y_2)\} \big)} \leq 2 < 4$. The same argument holds for~$(x_2 + \Delta) \cap \calA = \emptyset$.

	\item Suppose then that both~$(x_1 + \Delta) \cap \calA \neq \emptyset$ and~$(x_2 + \Delta) \cap \calA \neq \emptyset$. Then, $(x_j + \Delta)$ can intersect at most~$m$ sets~$\calA_{i}^b$ with the same signature~$b$~(explicitly, $b = 11\dots1$). But, by construction, $\fkr(\calA_i^b) \leq \rho_o/m$, which implies that~$\fkr(\cup_i \calA_i^{11\dots}) \leq \rho_o$. We then consider the possible labels separately:
	\begin{itemize}
		\item for~$y_j = 1$, we know from~\eqref{eq:hypothesis} that~$\indicator\big[ h(x_j+\delta) \neq 1 \big] = 1$ only when~$x_j+\delta \in \calA_i^b$. But since~$\rho \geq \rho_o$, these sets can be ignored when computing the~$\rhosup$ and we get that
		\begin{equation*}
			\rhosup_{\delta \sim \fkr}\ \indicator\big[ h(x_j+\delta) \neq 1 \big] = 0
				\text{,} \quad \text{for all } h \in \calH
				\text{;}
		\end{equation*}

		\item for~$y_j = 0$, we obtain from~\eqref{eq:hypothesis} that~$\indicator\big[ h_b(x_j+\delta) \neq 0 \big] = 1$ everywhere except when~$x_j+\delta \in \calA_i^b$. Recall that for any signature~$b$, it holds that~$\fkr(\{\delta \mid x_j+\delta \in \bigcup_i \calA_i^b\}) \leq \rho_o$~(possibly with equality if~$b = 11\dots$). Hence, since~$\rho < 1 - \rho_o$, it holds that
		\begin{equation*}
			\rhosup_{\delta \sim \fkr}\ \indicator\big[ h(x_j+\delta) \neq 0 \big] = 1
				\text{,} \quad \text{for all } h \in \calH
				\text{.}
		\end{equation*}
	\end{itemize}
	Since the~$\rhosup$ does not vary over~$\calH$ for either~$x_1$ or~$x_2$, we conclude that~$\abs{\calS\big( \{(x_1, y_1), (x_2, y_2)\} \big)} \leq 2 < 4$, i.e., $\calH$ cannot shatter these points.
\end{itemize}
This implies~$d_\text{VC} \leq 1$ and since~$\abs{\calH} > 1$, we obtain~$d_\text{VC} = 1$. Using Lemma~\ref{L:sample_complexity}, we therefore conclude that~$N = \Theta\big( 1/\epsilon^2 \big)$.

Finally, we consider the case of~\eqref{eq:nom-training} for the nominal loss~$g\big( h, x, y \big) = \indicator\big[ h(x) \neq y \big]$. Once again, we take two points~$(x_1, y_1), (x_2, y_2) \in \calX \times \calY$ and proceed case-by-case.

\begin{itemize}
	\item Suppose, without loss of generality, that~$x_1 \notin \calA$. Then, \eqref{eq:hypothesis} yields~$h(x_1) = 1$ for all~$h \in \calH$ and~$\ell\big( h(x_1), y_1 \big) = \indicator\big[ 1 \neq y_1 \big]$. Depending on the value of~$y_1$, $\calS\big( \{(x_1, y_1), (x_2, y_2)\} \big)$ only has sets of the form~$(0,q)$ or~$(1,q)$, $q = \{0,1\}$, but not both. For such points, $\abs{\calS\big( \{(x_1, y_1), (x_2, y_2)\} \big)} \leq 2 < 4$. The same holds for~$x_2$.

	\item Suppose now that~$x_1 \in \calA_{i}^{b_1}$ and~$x_2 \in \calA_{i}^{b_2}$. Then, $h_b(x_j) = 0$ for~$b = b_j$ or~$h_b(x_j) = 1$ for~$b \neq b_j$. Hence,
	\begin{itemize}
		\item if~$b_1 = b_2$, then~$h_b(x_1) = h_b(x_2)$. Depending on the value of the labels, $\calS\big( \{(x_1, y_1), (x_2, y_2)\} \big)$ only has sets of the form~$(q,q)$ or~$(1-q,q)$, $q = \{0,1\}$, but not both. For such points, we once again have~$\abs{\calS\big( \{(x_1, y_1), (x_2, y_2)\} \big)} \leq 2 < 4$;

		\item if~$b_1 \neq b_2$, then~$h_b(x_1) = 1 - h_b(x_2)$. Depending on the value of the labels, $\calS\big( \{(x_1, y_1), (x_2, y_2)\} \big)$ has, once again, only sets of the form~$(q,q)$ or~$(1-q,q)$, $q = \{0,1\}$, but not both. Hence, $\abs{\calS\big( \{(x_1, y_1), (x_2, y_2)\} \big)} \leq 2 < 4$ for such points;
	\end{itemize}
\end{itemize}

In none of these cases~$\calH$ is able to shatter two points, meaning that~$d_\text{VC} \leq 1$. Since~$\abs{\calH} > 1$, it holds that~$d_\text{VC} = 1$ which is indeed the same value as when~$\rho \geq \rho_o$.
\end{proof}
\newpage
\section{Hyperparameter selection and implementation details} \label{app:training}

In this appendix, we discuss hyperparameter selection and computational details.  All experiments were run across two four-GPU workstations, comprising a total of eight Quadro RTX 5000 GPUs.  Our code is available at: \url{https://github.com/arobey1/advbench}.

\subsection{MNIST}

For the MNIST dataset~\cite{MNISTWebPage}, we used a four-layer CNN architecture with two convolutional layers and two feed-forward layers.  To train these models, we use the Adadelta optimizer~\cite{zeiler2012adadelta} to minimize the cross-entropy loss for 150 epochs with no learning rate day and an initial learning rate of 1.0.  All classifiers were evaluated with a 10-step PGD adversary.  To compute the augmented accuracy, we sampled ten samples from $\fkr$ per data point, and to compute the ProbAcc metric, we sample 100 perturbations per data point.  

\subsection{CIFAR-10 and SVHN}

For CIFAR-10~\cite{krizhevsky2009learning} and SVHN~\cite{netzer2011reading}, we used the ResNet-18 architecture~\cite{he2016deep}.  We trained using SGD and an initial learning rate of 0.01 and a momentum of 0.9.  We also used weight decay with a penalty weight of $3.5 \times 10^{-3}$.  All classifiers were trained for 115 epochs, and we decayed the learning rate by a factor of 10 at epochs 55, 75, and 90.

\subsection{Baseline algorithms}

In the experiments section, we trained a number of baseline algorithms.  In what follows, we list the hyperparameters we used for each of these algorithms:

\begin{itemize}
    \item \textbf{PGD.}  For MNIST, we ran seven steps of gradient ascent at training time with a step size of $\alpha=0.1$.  On CIFAR-10 and SVHN, we ran ten steps of gradient ascent at training time with a step size of $\alpha=2/255$.
    \item \textbf{TRADES.}  We used the same step sizes and number of steps as stated about for PGD.  Following the literature~\cite{zhang2019theoretically}, we used a weight of $\beta=6.0$ for all datasets.
    \item \textbf{MART.}  We used the same step sizes and number of steps as stated about for PGD.  Following the literature~\cite{wang2019improving}, we used a weight of $\lambda=5.0$ for all datasets.
    \item \textbf{DALE.}  We used the same step sizes and number of steps as stated about for PGD.  For all datasets, we used a margin of $\rho=0.1$ (note that this $\rho$ is different from the $\rho$ used in the definition of probabilistically robust learning).  For MNIST, we used a dual step size of $\eta_p=1.0$; for CIFAR-10 and SVHN, we used $\eta_p=0.01$.  For MNIST, we used a temperature of $\sqrt{2\eta T}$ of $10^{-3}$; for CIFAR-10 and SVHN, we used $10^{-5}$.
    \item \textbf{TERM.}  We chose the value of $t$ in~\eqref{eq:term} by cross-validation on the set $\{0.1, 0.5, 1.0, 5.0, 10.0, 50.0\}$.
    \item \textbf{N-HMC}  We chose the value of $q$ in~\eqref{eq:rice} by cross-validation on the set $\{10^1, 10^2, 10^3\}$ (which is the same range considered in the experiments in~\cite{rice2021robustness}).
\end{itemize}

\subsection{Hyperparamters for probabilistically robust learning}

We ran sweeps over a range of hyperparameters for Algorithm~\ref{alg:cvar-sgd}.  By selecting $M$ from $\{1, 2, 5, 10, 20\}$, we found more samples from $\fkr$ engendered higher levels of robustness.  Thus, we use $M=20$ throughout.  We use a step size of $\eta_\alpha=1.0$ throughout.  $T$ was also selected by cross validation from $\{1, 2, 5, 10, 20\}$.  In general, it seemed to be the case that more than 10 steps did not result in significant improvements in robustness.  Finally, in Tables~\ref{tab:cifar-accs}--\ref{tab:mnist-accs}, we selected $\rho$ by cross-validation on $\{0.01, 0.05, 0.1, 0.5, 1.0, 2.0, 3.0, 4.0\}$ (note that in practice, $\rho$ can be chosen to be larger than one).  We found that perhaps surprisingly, larger values of $\rho$ tended to engender higher levels of probabilistic robustness through the metric ProbAcc.  However, this may be due to the instability of training for small values of $\rho$.  In Figure~\ref{fig:acc-vs-rob}, we show the robustness trade-offs for a sweep over different values of $\rho$.

\end{document}